\documentclass{article}


\usepackage[final]{neurips_2021}




\usepackage[utf8]{inputenc} 
\usepackage[T1]{fontenc}    
\usepackage{hyperref}       
\usepackage{url}            
\usepackage{booktabs}       
\usepackage{amsfonts}       
\usepackage{nicefrac}       
\usepackage{microtype}      
\usepackage{xcolor}         

\usepackage{changepage}
\usepackage{enumitem}
\usepackage{amsthm}
\usepackage{pifont}
\newcommand{\cmark}{\ding{51}}%
\newcommand{\xmark}{\ding{55}}%
\usepackage{algorithm}
\usepackage[noend]{algpseudocode}
\usepackage{xfrac}
\usepackage{arydshln}
\usepackage{mathtools}
\usepackage{enumitem}
\usepackage{amsmath}
\usepackage{dsfont}
\usepackage{graphicx}
\usepackage{amssymb}
\usepackage{caption}
\DeclareMathOperator*{\argmin}{arg\,min}

\newtheorem{theorem}{Theorem}

\makeatletter
  \renewcommand*\env@matrix[1][*\c@MaxMatrixCols c]{%
    \hskip -\arraycolsep
    \let\@ifnextchar\new@ifnextchar
  \array{#1}}
\makeatother

\usepackage{listings}
\usepackage{xcolor}

\definecolor{codegreen}{rgb}{0,0.6,0}
\definecolor{codegray}{rgb}{0.5,0.5,0.5}
\definecolor{codepurple}{rgb}{0.58,0,0.82}
\definecolor{backcolour}{rgb}{0.95,0.95,0.92}

\lstdefinestyle{mystyle}{
    backgroundcolor=\color{backcolour},   
    commentstyle=\color{codegreen},
    keywordstyle=\color{magenta},
    numberstyle=\tiny\color{codegray},
    stringstyle=\color{codepurple},
    basicstyle=\ttfamily\footnotesize,
    breakatwhitespace=false,         
    breaklines=true,                 
    captionpos=b,                    
    keepspaces=true,                 
    numbers=left,                    
    numbersep=5pt,                  
    showspaces=false,                
    showstringspaces=false,
    showtabs=false,                  
    tabsize=2
}

\lstset{style=mystyle}


\title{
%
Implicit SVD for Graph Representation Learning
}

%

\author{%
  Sami Abu-El-Haija \thanks{Part of this work was done during internship at Intel Labs.} \\
  USC Information Sciences Institute \\
  \texttt{sami@haija.org} \\
  \And
  Hesham Mostafa, Marcel Nassar \\
  Intel Labs \\
  \texttt{\{hesham.mostafa,marcel.nassar\}@intel.com}
  \And
  Valentino Crespi, Greg Ver Steeg, Aram Galstyan \\
  USC Information Sciences Institute \\
  \texttt{\{vcrespi,gregv,galstyan\}@isi.edu} \\
}

\begin{document}

\maketitle

\begin{abstract}
Recent improvements in the performance of state-of-the-art (SOTA) methods for Graph Representational Learning (GRL) have come at the cost of significant computational resource requirements for training, e.g., for calculating gradients via backprop over many data epochs. Meanwhile, Singular Value Decomposition (SVD) can find closed-form solutions to convex problems, using merely a handful of epochs. In this paper, we make GRL more computationally tractable for those with modest hardware. We design a framework that computes SVD of \textit{implicitly} defined matrices, and apply this framework to several GRL tasks. For each task, we derive linear approximation of a SOTA model, where we design (expensive-to-store) matrix $\mathbf{M}$ and train the model, in closed-form, via SVD of $\mathbf{M}$, without calculating entries of $\mathbf{M}$. By converging to a unique point in one step, and without calculating gradients, our models show competitive empirical test performance over various graphs such as article citation and biological interaction networks. More importantly,  SVD can initialize a deeper model, that is architected to be non-linear almost everywhere, though behaves linearly when its parameters reside on a hyperplane, onto which SVD initializes. The deeper model can then be fine-tuned within only a few epochs.  Overall, our procedure trains hundreds of times faster than state-of-the-art methods, while competing on empirical test performance. We open-source our implementation at: \url{https://github.com/samihaija/isvd}
\end{abstract}


\section{Introduction}
Truncated Singular Value Decomposition (SVD) 
provides solutions to a variety of mathematical problems, including computing a matrix rank, its pseudo-inverse, or mapping its rows and columns onto the \textit{orthonormal singular bases} for low-rank approximations.
Machine Learning (ML) software frameworks (such as TensorFlow) offer efficient SVD implementations, as SVD
can estimate solutions for a variaty of tasks, e.g.,  in
\textit{computer vision} \citep[][]{turk1991eigenfaces},
\textit{weather prediction} \citep{molteni1996ECMWF},
\textit{recommendation} \citep{koren2009recsys},
\textit{language} \citep{deerwester1990-lsi, levy2014-neural},
and more-relevantly, \textit{graph representation learning} (GRL) \citep{qiu2018network}.

SVD's benefits include training models, without calculating gradients, to arrive at globally-unique solutions, optimizing Frobenius-norm objectives (\S\ref{sec:prelimsvd}), without requiring hyperparameters for the learning process, such as the choice of the learning algorithm, step-size, regularization coefficient, etc.
Typically, one \textit{constructs} a
\textbf{design matrix} $\mathbf{M}$, such that, its decomposition provides a solution to a task of interest.
Unfortunately, existing popular ML frameworks \citep{tensorflow, pytorch}
cannot calculate the SVD of an arbitrary linear matrix given its computation graph:
they
compute the matrix (entry-wise)
then\footnote{TensorFlow can caluclate matrix-free SVD if one implements a \texttt{LinearOperator},
as such, our code could be re-implemented as a routine that can convert \texttt{TensorGraph} to \texttt{LinearOperator}.} its decomposition.
This limits the scalability of these libraries in several cases of interest, such as in GRL, when explicit calculation of the matrix is prohibitive due to memory constraints.
These limitations
render SVD as impractical for achieving state-of-the-art (SOTA) for tasks at hand.
This has been circumvented by \citet{qiu2018network} by sampling $\mathbf{M}$ entry-wise, but this produces sub-optimal estimation error and experimentally degrades the empirical test performance (\S\ref{sec:experiments}: Experiments).

We design a software library that allows \textbf{symbolic} definition of $\mathbf{M}$, via composition of matrix operations, and we implement an SVD algorithm that can decompose $\mathbf{M}$ from said symbolic representation, without need to compute $\mathbf{M}$.
This is valuable for many GRL tasks,
where the design matrix $\mathbf{M}$ is too large, \textit{e.g.}, quadratic in the input size. With our implementation, we show that SVD can perform learning, orders of magnitudes faster than current alternatives. 
%
%
%

Currently, SOTA GRL models
are generally graph neural networks trained
to optimize cross-entropy objectives.
Their inter-layer non-linearities
place their (many) parameters onto a non-convex objective surface where convergence is rarely verified\footnote{Practitioners rarely verify that $\nabla_\theta J = 0$, where $J$ is mean train objective and $\theta$ are model parameters.}.
Nonetheless,
these models can be \textit{convexified} (\S\ref{sec:convexification}) and trained via SVD, \textbf{if} we remove nonlinearities between layers \textbf{and} swap the cross-entropy objective with Frobenius norm minimization.
Undoubtedly, such linearization incurs a drop of accuracy on empirical test performance.
Nonetheless, we show that the (convexified) model's parameters learned by SVD can provide initialization to deeper (non-linear) models, which then can be fine-tuned on cross-entropy objectives. The non-linear models are endowed with our novel Split-ReLu layer, which has twice as many parameters as a ReLu fully-connected layer, and behaves as a linear layer when its parameters reside on some hyperplane (\S\ref{sec:finetuneNC}).
Training on modest hardware (e.g., laptop) is sufficient for this learning pipeline (convexify $\rightarrow$ SVD $\rightarrow$ fine-tune) yet it trains much faster than current approaches, that are commonly trained on expensive hardware.
%
We summarize our contributions as:
%
\begin{enumerate}[itemsep=0pt, topsep=0pt,leftmargin=15pt]
\item We open-source a flexible python software library that allows symbolic definition of  matrices and computes their SVD without explicitly calculating them.
\item
We linearize popular GRL models,
and train them via SVD of design matrices.
\item We show that fine-tuning a few parameters on-top of the SVD initialization sets state-of-the-art on many GRL tasks while, overall, training orders-of-magnitudes faster.
\end{enumerate}

\section{Preliminaries \& notation}
We denote a graph with $n$ nodes and $m$ edges with an \textit{adjacency matrix} $\mathbf{A} \in \mathbb{R}^{n\times n}$ and additionally, if nodes have ($d$-dimensional) features,
with a \textit{feature matrix} $\mathbf{X} \in \mathbb{R}^{n \times d}$.
If nodes $i, j \in [n]$ are connected then
$\mathbf{A}_{ij}$ is set to their edge weight and otherwise $\mathbf{A}_{ij} = 0$.
Further, denote the (row-wise normalized) \textit{transition matrix} as $\mathcal{T} = \mathbf{D}^{-1} \mathbf{A}$ and denote the symmetrically normalized adjacency with self-connections as
$\widehat{\mathbf{A}} = (\mathbf{D} + \mathbf{I})^{-\frac12} (\mathbf{A} + \mathbf{I}) (\mathbf{D} + \mathbf{I})^{-\frac12}$ where $\mathbf{I}$ is identity matrix.

We review model classes: (1) network embedding and (2) message passing that we define as follows.
The first inputs a graph ($\mathbf{A}$, $\mathbf{X}$) and outputs \textit{node embedding matrix} $ \mathbf{Z} \in \mathbb{R}^{n \times z}$ with $z$-dimensions per node.
$\mathbf{Z}$ is then used for an upstream task, \textit{e.g.}, link prediction.
%
The second
class utilizes a function $\mathbf{H} : \mathbb{R}^{n \times n} \times \mathbb{R}^{n \times d} \rightarrow \mathbb{R}^{n \times z}$ where the function $\mathbf{H}(\mathbf{A}, \mathbf{X})$ is usually directly trained on the upstream task, \textit{e.g.}, node classification.
In general, the first class is transductive while the second is inductive.

\subsection{Network embedding models based on DeepWalk \& node2vec}
\label{sec:prelimNE}
The seminal work of DeepWalk \citep{perozzi2014deepwalk} embeds nodes of a network using a two-step process: (i) simulate random walks on the graph -- each walk generating a sequence of node IDs then (ii) pass the walks (node IDs) to a language word embedding algorithm, e.g. word2vec \citep{word2vec}, as-if each walk is a sentence. This work was extended by node2vec \citep{grover2016node2vec} among others.
It has been shown by \citet{wys} that the learning outcome of the two-step process of DeepWalk is equivalent, in expectation, to optimizing a single objective\footnote{Derivation is in \citep{wys}. Unfortunately, matrix in Eq.~\ref{eq:wys} is dense with $\mathcal{O}(n^2)$ nonzeros.}:
\begin{equation}
\min_{\mathbf{Z} = \{\mathbf{L},  \mathbf{R}\} } \sum_{(i, j) \in [n] \times [n]} \left[ - \mathop{\mathbb{E}}_{q \sim Q}\left[ \mathcal{T}^q \right] \circ \log \sigma ( \mathbf{L}  \mathbf{R}^\top ) -  \lambda (1 - \mathbf{A}) \circ \log(1- \sigma ( \mathbf{L}  \mathbf{R}^\top ) ) \right]_{ij},
\label{eq:wys}
\end{equation}
where $\mathbf{L},  \mathbf{R} \in \mathbb{R}^{n \times \frac{z}{2}}$ are named by word2vec as the \textit{input} and \textit{output} embedding matrices, $\circ$ is Hadamard product, and the $\log(.)$ and the standard logistic $\sigma(.) = (1+\exp(.))^{-1}$  are applied element-wise.
The objective above is weighted cross-entropy
where the (left) positive term weighs the 
dot-product $\mathbf{L}_i^\top \mathbf{R}_j$ by the (expected) number of random walks simulated from $i$ and passing through $j$, and the (right) negative term weighs non-edges $(1-\mathbf{A})$ by scalar $\lambda \in \mathbb{R}_{+}$.
The \textit{context distribution} $Q$ stems from step (ii) of the process.
In particular, word2vec accepts hyperparameter \textit{context window size} $C$ for its stochasatic sampling:
when it samples a \textit{center token} (node ID), it then samples its \textit{context tokens} that are up-to distance $c$ from the center. The integer $c$ is sampled from a coin flip uniform on the integers $[1, 2, \dots, C]$ -- as detailed by Sec.3.1 of \citep{levy-goldberg}. Therefore, $P_Q(q \mid C) \propto \frac{C - q + 1}{C}$. Since $q$ has support on $[C]$, then $P_Q(q \mid C) = \left(\frac{2}{(C+1) C}\right) \frac{C - q + 1}{C}$. 

\subsection{Message passing graph networks for (semi-)supervised node classification}
\label{sec:prelimMP}
We are also interested in a class of (message passing) graph network models taking the general form:
\begin{equation}
\label{eq:prelim_mp}
\textrm{ for $l=0, 1, \dots L$: \hspace{0.2cm} }\mathbf{H}^{(l+1)} = \sigma_l \left( g ( \mathbf{A} )  \mathbf{H}^{(l)}  \mathbf{W}^{(l)} \right); \hspace{0.4cm}  \mathbf{H}^{(0)} = \mathbf{X};  \hspace{0.4cm} \mathbf{H} = \mathbf{H}^{(L)};
\end{equation}
where $L$ is the number of layers,
$\mathbf{W}^{(l)}$'s are trainable parameters, $\sigma_l$'s denote element-wise activations (e.g. logistic or ReLu), and $g$ is some (possibly trainable) transformation of adjacency matrix. GCN \citep{kipf} set $g(\mathbf{A}) = \widehat{\mathbf{A}}$,
GAT \citep{gat} set $g(\mathbf{A}) = \mathbf{A} \circ \textrm{MultiHeadedAttention}$ and GIN \citep{GIN} as $g(\mathbf{A}) = \mathbf{A} + (1+\epsilon) \mathbf{I} $ with $\epsilon > 0$.
For node classification, it is common to set $\sigma_L = \textrm{softmax}$ (applied row-wise), specify the size of $\mathbf{W}_L$ s.t. $\mathbf{H} \in \mathbb{R}^{n \times y}$ where $y$ is number of classes, and optimize cross-entropy objective: \newline
\textcolor{white}{.}
\hspace{0.2cm}
$\min_{ \{\mathbf{W}_j\}_{j=1}^L}  \left[-  \mathbf{Y} \circ \log \mathbf{H} -  (1-\mathbf{Y}) \circ \log (1-\mathbf{H})\right],$
\hspace{0.4cm}
where $\mathbf{Y}$ is a binary matrix with one-hot rows indicating node labels. 
In semi-supervised settings where not all nodes are labeled, before measuring the objective, subset of rows can be kept in $\mathbf{Y}$ and $\mathbf{H}$ that correspond to labeled nodes.

\subsection{Truncated Singular Value Decomposition (SVD)}
\label{sec:prelimsvd}
%

SVD is an  algorithm that approximates any matrix $\textbf{M} \in \mathbb{R}^{r \times c}$ as a product of three matrices:
\begin{equation*}
 \textrm{SVD}_k(\mathbf{M}) \triangleq  \argmin_{\mathbf{U}, \mathbf{S}, \mathbf{V}} || \mathbf{M} - \mathbf{U} \mathbf{S} \mathbf{V}^\top ||_\textrm{F} \hspace{0.2cm}\textrm{subject to}\hspace{0.2cm} \mathbf{U}^\top\mathbf{U} = \mathbf{V}^\top \mathbf{V} = \mathbf{I}_k; \hspace{0.1cm} \mathbf{S} = \textrm{diag}(s_1, \dots, s_k).
\end{equation*}
The \textit{orthonormal} matrices  $\mathbf{U} \in \mathbb{R}^{r \times k}$ and $\mathbf{V} \in \mathbb{R}^{c \times k} $, respectively, are known as the left- and right-singular bases. The values along diagonal matrix $\mathbf{S} \in \mathbb{R}^{k \times k}$ are known as the \textit{singular values}.
Due to theorem of \citet{eckart1936lowrank}, SVD recovers the best rank-$k$ approximation of input $\mathbf{M}$, as measured by the Frobenius norm $||.||_\textrm{F}$. 
Further, if $k\ge \textrm{rank}(\mathbf{M}) \Rightarrow  ||.||_\textrm{F} =0$.

Popular SVD implementations follow Random Matrix Theory  algorithm of \citet{halko2009svd}. 
The prototype algorithm starts with a random matrix and repeatedly multiplies it by $\mathbf{M}$ and by $\mathbf{M}^\top$, interleaving these multiplications with orthonormalization.
Our SVD implementation (in Appendix) also follows the prototype of \citep{halko2009svd}, but with two modifications: (i) we replace the recommended orthonormalization step from QR decomposition to Cholesky decomposition, giving us significant computational speedups and (ii) our implementation accepts
symbolic representation of $\mathbf{M}$ (\S\ref{sec:symbolic}), in lieu of its explicit value (constrast to TensorFlow and PyTorch, requiring explicit $\mathbf{M}$).

In \S\ref{sec:convexification}, we derive linear first-order approximations of models reviewed in \S\ref{sec:prelimNE} \&  \S\ref{sec:prelimMP} and explain how SVD can train them. In \S\ref{sec:finetuning}, we  show how they can be used as initializations of non-linear models.

\section{Convex first-order approximations of GRL models}
\label{sec:convexification}
\subsection{Convexification of Network Embedding Models}
\label{sec:cvxNE}

We can interpret objective~\ref{eq:wys} as self-supervised learning, since node labels are absent.
Specifically,
given a node $i \in [n]$, the task is to predict its neighborhood as weighted by the row vector $\mathbb{E}_q[\mathcal{T}^q]_i$, representing the subgraph\footnote{$\mathbb{E}_q[\mathcal{T}^q]_i$ is a distribution on $[n]$: entry $j$ equals prob. of walk starting at $i$ ending at $j$ if walk length $\sim \mathcal{U}[C]$.} around $i$.
Another interpretation
is that Eq.~\ref{eq:wys} is a decomposition objective:
multiplying the tall-and-thin matrices, as $\mathbf{L} \mathbf{R}^\top \in \mathbb{R}^{n \times n}$,
should give a larger value at $(\mathbf{L} \mathbf{R}^\top)_{ij} = \mathbf{L}_j^\top \mathbf{R}_i$ when nodes $i$ and $j$ are well-connected but a lower value when $(i, j)$ is not an edge.
We propose a matrix such that its decomposition can incorporate the above interpretations:
\begin{equation}
\label{eq:ourNE}
\widehat{\mathbf{M}}^\textrm{(NE)} = \mathbb{E}_{q|C}[\mathcal{T}^q] - \lambda(1 - \mathbf{A}) = \left(\frac{2}{(C+1) C}\right) \sum_{q=1}^C \left(\frac{C-q+1}{C} \right) \mathcal{T}^q - \lambda ( 1 - \mathbf{A})
\end{equation}
If nodes $i, j$ are nearby, share a lot of connections, and/or in the same community, then entry $\widehat{\mathbf{M}}^\textrm{(NE)}_{ij}$ should be positive.
If they are far apart, then $\widehat{\mathbf{M}}^\textrm{(NE)}_{ij} = -\lambda$. To embed the nodes onto a low-rank space that approximates this information, one can 
decompose $\widehat{\mathbf{M}}^\textrm{(NE)}$ into two thin matrices $(\mathbf{L}, \mathbf{R})$:
\begin{equation}
\label{eq:ourNEapprox}
\mathbf{L} \mathbf{R}^\top \approx \widehat{\mathbf{M}}^\textrm{(NE)} \Longleftrightarrow (\mathbf{L} \mathbf{R}^\top)_{i, j} = \langle \mathbf{L}_i, \mathbf{R}_j \rangle \approx \widehat{\mathbf{M}}^\textrm{(NE)}_{ij} \textrm{ \ \ \ for all \ } i, j \in [n].
\end{equation}
SVD gives low-rank approximations that minimize the Frobenius norm of error (\S\ref{sec:prelimsvd}).
The remaining challenge is computational burden: the right term ($1 - \mathbf{A}$), \textit{a.k.a}, graph compliment, has $\approx n^2$ non-zero entries and
the left term has non-zero at entry $(i, j)$ if nodes $i, j$ are within distance $C$ away, as $q$ has support on $[C]$ -- for reference
Facebook network has an average distance of 4 \citep{backstrom2012fourdegrees} i.e. yielding $\mathcal{T}^4$ with $\mathcal{O}(n^2)$ nonzero entries --
Nonetheless, Section \S\ref{sec:symbolic} presents a framework for decomposing $\widehat{\mathbf{M}}$ from its symbolic representation, without explicitly computing its entries. Before moving forward, we note that one can replace $\mathcal{T}$ in Eq.~\ref{eq:ourNE} by its symmetrically normalized counterpart $\widehat{\mathbf{A}}$, recovering a basis where $\mathbf{L} = \mathbf{R}$. This symmetric modeling might be emperically preferred for undirected graphs.
\underline{\textbf{Learning}} can be performed via SVD.
Specifically, the node at the $i^\textrm{th}$ row and the node at the $j^\textrm{th}$th column will be embedded, respectively, in $\mathbf{L}_i$ and $\mathbf{R}_j$ computed as:
\begin{equation}
\label{eq:svd_train_NE}
\mathbf{U}, \mathbf{S}, {\mathbf{V}}  \leftarrow \textrm{SVD}_k(\widehat{\mathbf{M}}^\textrm{(NE)}); \hspace{0.6cm}
{\mathbf{L}}  \leftarrow {\mathbf{U}} {\mathbf{S}}^{\frac12};
\hspace{0.6cm}
{\mathbf{R}}  \leftarrow {\mathbf{V}} {\mathbf{S}}^{\frac{1}{2}}
\end{equation}
In this $k$-dim 
space of rows and columns, Euclidean measures are plausible: \underline{\textbf{Inference}} of nodes' similarity at row $i$ and column $j$ can be modeled as $f(i, j) = \langle {\mathbf{L}}_i, {\mathbf{R}}_j \rangle = \mathbf{U}_i^\top \mathbf{S} \mathbf{V}_j \triangleq \langle {\mathbf{U}}_i, {\mathbf{V}}_j \rangle_{_{\mathbf{S}}} $.

\subsection{Convexification of message passing graph networks}
\label{sec:cvx:MP}
Removing all $\sigma_l$'s from Eq.~\ref{eq:prelim_mp} and setting $g(\mathbf{A}) = \widehat{\mathbf{A}}$ gives outputs of layers 1, 2, and $L$,  respectively,
\begin{equation}
\textrm{as: \hspace{0.4cm}} \widehat{\mathbf{A}} \mathbf{X}  \mathbf{W}^{(1)} \textrm{\hspace{0.5cm} , \hspace{0.5cm} }  \widehat{\mathbf{A}}^2 \mathbf{X}  \mathbf{W}^{(1)}  \mathbf{W}^{(2)}  \textrm{\hspace{0.5cm} ,\hspace{0.2cm} and \hspace{0.5cm} } \widehat{\mathbf{A}}^L \mathbf{X}  \mathbf{W}^{(1)}  \mathbf{W}^{(2)} \dots \mathbf{W}^{(L)}.
\end{equation}
Without non-linearities, adjacent parameter matrices can be absorbed into one another.
Further, the model output can concatenate all layers, like JKNets \citep{jknet}, giving final model output of:
\begin{equation}
\label{eq:h_jkn}
\mathbf{H}^\textrm{(NC)}_\textrm{linearized} =
\left[
\begin{matrix}[c;{1pt/1pt}c;{1pt/1pt}c;{1pt/1pt}c;{1pt/1pt}c]
\mathbf{X} &  \widehat{\mathbf{A}} \mathbf{X} &  \widehat{\mathbf{A}}^2 \mathbf{X} & \dots &  \widehat{\mathbf{A}}^L \mathbf{X}  \end{matrix} \right]
\widehat{\mathbf{W}}
\hspace{0.5cm}
\triangleq
\hspace{0.2cm}
\widehat{\mathbf{M}}^{\textrm{(NC)}} \widehat{\mathbf{W}} ,
\end{equation}
where the linearized model implicitly constructs design matrix $\widehat{\mathbf{M}}^{\textrm{(NC)}} \in \mathbb{R}^{n \times F}$ and multiplies it with parameter $\widehat{\mathbf{W}} \in \mathbb{R}^{F \times y}$ -- here, $F = d+dL$.
Crafting design matrices is a creative process (\S\ref{sec:creative}).
%
\underline{\textbf{Learning}} can be performed by minimizing the Frobenius norm: $||\mathbf{H}^{\textrm{(NC)}} - \mathbf{Y}||_\textrm{F} = || \widehat{\mathbf{M}}^\textrm{(NC)} \widehat{\mathbf{W}} - \mathbf{Y}||_\textrm{F}$.
Moore-Penrose Inverse (a.k.a, the psuedoinverse) provides one such minimizer:
\begin{equation}
\label{eq:wstar}
\widehat{\mathbf{W}}^{*} = \textrm{argmin}_{\widehat{\mathbf{W}}} \Big|\Big| \widehat{\mathbf{M}} \widehat{\mathbf{W}} - \mathbf{Y} \Big|\Big|_\textrm{F} = 
\widehat{\mathbf{M}}^\dagger \mathbf{Y} 
\approx \mathbf{V} \mathbf{S}^{+} \mathbf{U}^\top \mathbf{Y},
\end{equation}
with $\mathbf{U}, \mathbf{S}, \mathbf{V} \leftarrow \textrm{SVD}_k(\widehat{\mathbf{M}})$.
Notation $\mathbf{S}^{+}$ reciprocates non-zero entries of diagonal $\mathbf{S}$ \citep{golub1996matrix}.
Multiplications in the right-most term should, for efficiency, be executed right-to-left.
The pseudoinverse $\widehat{\mathbf{M}}^\dagger \approx \mathbf{V} \mathbf{S}^{+} \mathbf{U}^\top$ recovers the $\widehat{\mathbf{W}}^{*}$ with least norm (\S\ref{sec:analysis}, Theorem~\ref{theorem:min_norm}).
The $\approx$ becomes $=$ when $k \ge \textrm{rank}(\widehat{\mathbf{M}})$.

In semi-supervised settings, one can take rows subset 
of either (i) $\mathbf{Y}$ and $\mathbf{U}$,  or of (ii) $\mathbf{Y}$ and $\mathbf{M}$, keeping only rows that correspond to labeled nodes. Option (i) is supported by existing frameworks (e.g., \texttt{tf.gather()}) and  our symbolic framework (\S\ref{sec:symbolic}) supports (ii) by \textit{implicit row (or column) gather} -- i.e., calculating SVD of submatrix of $\mathbf{M}$ without explicitly computing $\mathbf{M}$ nor the submatrix.
\underline{\textbf{Inference}} over a (possibly new) graph $(\mathbf{A}, \mathbf{X})$ can be calculated  by (i) (implicitly) creating the design matrix $\widehat{\mathbf{M}}$ corresponding to $(\mathbf{A}, \mathbf{X})$ then (ii) multiplying by the explicitly calculated $\widehat{\mathbf{W}}^{*}$. As explained in \S\ref{sec:symbolic}, $\widehat{\mathbf{M}}$ need not to be explicitly calculated for computing multiplications.



\section{Symbolic matrix representation}
\label{sec:symbolic}
To compute the SVD of any matrix $\mathbf{M}$ using algorithm prototypes presented by \citet{halko2009svd}, \textbf{it suffices to provide functions that can multiply arbitrary vectors} with $\mathbf{M}$ and $\mathbf{M}^\top$, and \textbf{explicit calculation} of $\mathbf{M}$ is \textbf{not required}.
Our software framework can symbolically represent $\mathbf{M}$
as a directed acyclic graph (DAG) of computations.
On this DAG, each 
node can be one of two kinds:
\begin{enumerate}[topsep=0pt, itemsep=0pt]
\item \textbf{Leaf node} (no incoming edges) that \textbf{explicitly} holds a matrix.
Multiplications against leaf nodes are directly executed via an underlying math framework (we utilize TensorFlow).
\item \textbf{Symbolic node} that only \textbf{implicitly}
represents a matrix
as as a function of other DAG nodes.
Multiplications are recursively computed,
traversing incoming edges, until leaf nodes.
\end{enumerate}
For instance, suppose leaf DAG nodes $\mathbf{M}_1$ and $\mathbf{M}_2$, respectively, explicitly contain row vector $\in \mathbb{R}^{1 \times n}$ and column vector $\in \mathbb{R}^{n \times 1}$.  Then, their (symbolic) product DAG node $\mathbf{M} = \mathbf{M}_2 \texttt{@} \mathbf{M}_1$ is $\in\mathbb{R}^{n \times n}$.
Although storing $\mathbf{M}$ explicitly requires $\mathcal{O}(n^2)$ space,
multiplications against $\mathbf{M}$
can remain within $\mathcal{O}(n)$ space
if efficiently implemented as
$\left\langle \mathbf{M}, . \right\rangle = \left\langle \mathbf{M}_2, \left\langle \mathbf{M}_1, . \right\rangle \right\rangle$.
Figure \ref{fig:symbolic} shows code snippet for composing
DAG to represent symbolic node  $\widehat{\mathbf{M}}^\textrm{(NE)}$ (Eq.~\ref{eq:ourNE}), from leaf nodes initialized with in-memory matrices.
Appendex lists symbolic nodes and their implementations.
\begin{figure}[t]
\begin{minipage}{0.465\textwidth}
\begin{lstlisting}[columns=fullflexible,language=Python]
a = scipy.sparse.csr_matrix(...)
d = scipy.sparse.diags(a.sum(axis=1))
t = (1/d).dot(a)
t, a = F.leaf(t), F.leaf(a)
row1 = F.leaf(tf.ones([1, a.shape[0]]))
q1, q2, q3 = np.array([3, 2, 1]) / 6.0
M = q1 * t + q2 * t@t + q3 * t@t@t
M -= lamda * (row1.T @ row1 - A)
\end{lstlisting}
\end{minipage}
~
\begin{minipage}{0.53\textwidth}
\includegraphics[height=2.8cm]{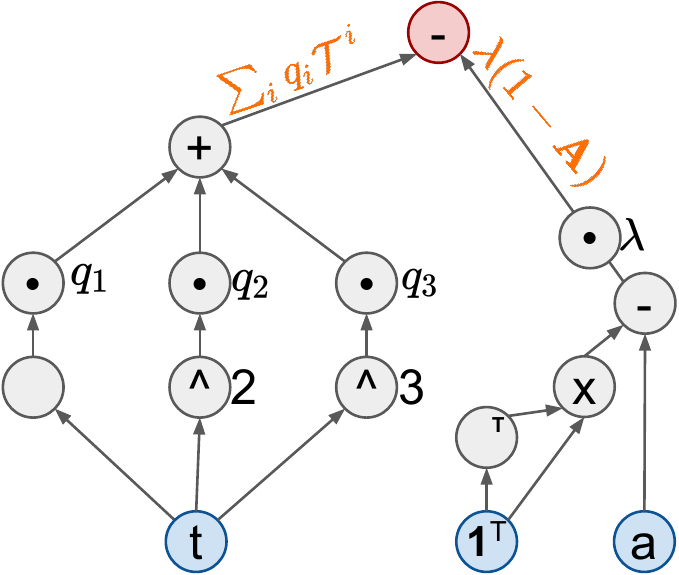}
~
\includegraphics[height=2.8cm]{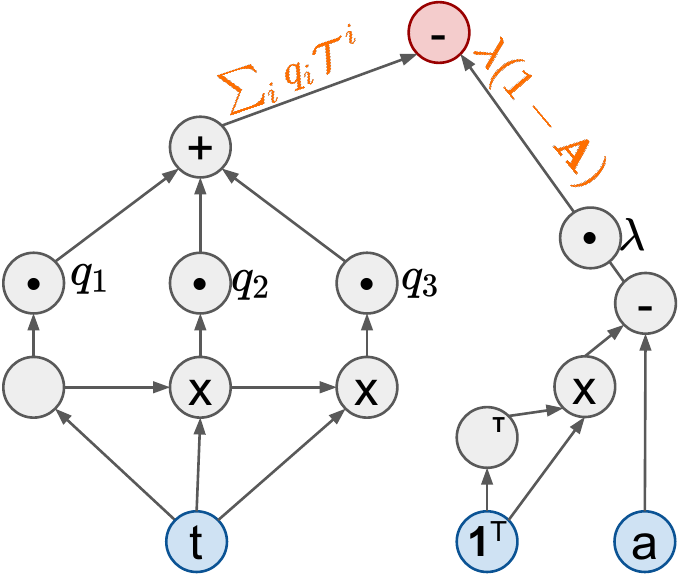}
\end{minipage}
\caption{Symbolic Matrix Representation. \textbf{Left}: code using our framework to implicitly construct the design matrix $\mathbf{M}=\widehat{\mathbf{M}}^\textrm{(NE)}$ with our framework. \textbf{Center}: DAG corresponding to the code. \textbf{Right}: An equivalent automatically-optimized DAG (via lazy-cache, Fig.~\ref{fig:time}) requiring fewer floating point operations. The first 3 lines of code create explicit input matrices (that fit in memory): \underline{\textbf{a}}djacency $\mathbf{A}$, diagonal \underline{\textbf{d}}egree $\mathbf{D}$, and \underline{\textbf{t}}ransition $\mathcal{T}$. Matrices are imported into our framework with $\texttt{F.leaf}$ (depicted on computation DAGs in blue). Our classes overloads standard methods ($\texttt{+}$, $\texttt{-}$, $\texttt{*}$, $\texttt{@}$, $\texttt{**}$) to construct computation nodes (intermediate in grey). The output node (in red) needs not be exactly calculated yet can be efficiently multiplied by any matrix by recursive downward traversal.}
\label{fig:symbolic}
\end{figure}

\section{SVD initialization for deeper models fine-tuned via cross-entropy}
\label{sec:finetuning}
\subsection{Edge function for network embedding as a (1-dimensional) Gaussian kernel}
\label{sec:finetuneNE}
SVD provides decent solutions to link prediction tasks.
Computing $\mathbf{U}, \mathbf{S}, \mathbf{V} \leftarrow \textrm{SVD}(\mathbf{M}^\textrm{(NE)})$ is much faster 
than training SOTA models for link prediction,
yet, simple edge-scoring  function
$f(i,j) = \langle  {\mathbf{U}}_i, {\mathbf{V}}_j \rangle_{_{\mathbf{S}}}$
yields competitive empirical (test) performance. We propose $f$ with $\theta = \{\mu, s\}$:
\begin{align}
\label{eq:gaussiankernel}
    f_{\mu, s}(i, j) = {\mathbb{E}}_{x \sim \overline{\mathcal{N}}(\mu, s) } \langle\mathbf{U}_i, \mathbf{V}_j \rangle_{_{\mathbf{S}^x}}
    & = \mathbf{U}_i^\top \mathbb{E}_x\left[\mathbf{S}^x \right] \mathbf{V}_j
    = \mathbf{U}_i^\top \left(\int_\Omega \mathbf{S}^x \overline{\mathcal{N}}(x \mid \mu, s ) \,dx \right) \mathbf{V}_j,
\end{align}
where $\overline{\mathcal{N}}$ is the truncated normal distribution (we truncate to $\Omega = [0.5, 2]$). The integral can be
approximated by discretization and applying softmax (see \S A.4).
The parameters $\mu \in \mathbb{R}, \sigma \in \mathbb{R}_{>0}$ can be optimized on cross-entropy objective for link-prediction:
\begin{equation}
\label{eq:finetuneNE}
\textrm{min}_{\mu, s} -{\mathbb{E}}_{(i, j) \in \mathbf{A}} \left[ \log \left( \sigma(f_{\mu, s}(i, j)) \right)  \right] 
- k_\textrm{(n)} {\mathbb{E}}_{(i, j) \notin \mathbf{A}} \left[ \log \left(1 - \sigma(f_{\mu, s}(i, j)) \right) \right], 
\end{equation}
where the left- and right-terms, respectively, encourage $f$ to score high for edges, and the low for non-edges.
$k_\textrm{(n)} \in \mathbb{N}_{>0}$ controls the ratio of negatives to positives per batch (we use $k_\textrm{(n)} = 10$).
If the optimization sets $\mu = 1$ and $s\approx 0$, then $f$ reduces to no-op. In fact,
we initialize it as such,
and we observe that
$f$ converges \textbf{within one epoch}, on graphs we experimented on.
If it converges as $\mu < 1$, VS $\mu > 1$, respectively, then $f$ would  effectively squash, VS enlarge, the spectral gap.

\subsection{Split-ReLu (deep) graph network for node classification (NC)}
\label{sec:finetuneNC}
$\widehat{\mathbf{W}
}^{*}$ from SVD (Eq.\ref{eq:wstar}) can initialize an $L$-layer graph network with input: $\mathbf{H}^{(0)} = \mathbf{X}$, with:
\begin{align}
\textrm{message passing (MP) }\hspace{1cm}&\mathbf{H}^{(l+1)} =  \left[\widehat{\mathbf{A}} \mathbf{H}^{(l)} \mathbf{W}_\textrm{(p)}^{(l)} \right]_+ - \left[\widehat{\mathbf{A}} \mathbf{H}^{(l)} \mathbf{W}_\textrm{(n)}^{(l)} \right]_+, \label{eq:splitrelu_mp} \\
\textrm{output }\hspace{1cm}&
\mathbf{H}=\sum_{l=0}^{l=L} \left[\mathbf{H}^{(l)}   \mathbf{W}_\textrm{(op)}^{(l)}  \right]_+  - \left[\mathbf{H}^{(l)}   \mathbf{W}_\textrm{(on)}^{(l)}  \right]_+ \label{eq:splitrelu_output} \\
\textrm{initialize MP }\hspace{1cm}& \mathbf{W}_\textrm{(p)}^{(l)} \leftarrow \mathbf{I};  \mathbf{W}_\textrm{(n)}^{(l)} \leftarrow -\mathbf{I}; \label{eq:splitrelu_mp_init} \\
\textrm{initialize output }\hspace{1cm}&
\mathbf{W}_\textrm{(op)}^{(l)} \leftarrow \widehat{\mathbf{W}}^*_{[dl \, :  \, d(l+1)]};
\mathbf{W}_\textrm{(on)}^{(l)} \leftarrow - \widehat{\mathbf{W}}^*_{[dl \, :  \, d(l+1)]};
\label{eq:splitrelu_output_init}
\end{align}
Element-wise $[.]_+=\max(0, .)$.
Further, $\mathbf{W}_{[i\, :\, j]}$ denotes rows from $(i)^\textrm{th}$ until $(j$$-$$1)^\textrm{th}$  of $\mathbf{W}$.

The deep network layers (Eq. \ref{eq:splitrelu_mp}\&\ref{eq:splitrelu_output}) use our Split-ReLu layer which we formalize as:
\begin{equation}
\textrm{SplitReLu}(\mathbf{X}; \mathbf{W}_\textrm{(p)}, \mathbf{W}_\textrm{(n)}) = \left[\mathbf{X} \mathbf{W}_\textrm{(p)} \right]_+ - \left[\mathbf{X} \mathbf{W}_\textrm{(n)} \right]_+,
\end{equation}
where the subtraction is calculated entry-wise. The layer has twice as many parameters as standard fully-connected (FC) ReLu layer. In fact, learning algorithms can recover FC ReLu from SplitReLu by  assigning $\mathbf{W}_\textrm{(n)} = 0$. More importantly, the layer behaves as linear in $\mathbf{X}$ when $\mathbf{W}_\textrm{(p)} = - \mathbf{W}_\textrm{(n)}$. On this hyperplane, this linear behavior allows us to establish the equivalency:
the (non-linear) model $\mathbf{H}$ is equivalent to the linear $\mathbf{H}_\textrm{linearized}$ at  initialization (Eq.~\ref{eq:splitrelu_mp_init}\&\ref{eq:splitrelu_output_init}) due to Theorem \ref{theorem:initialization}.
Following the initialization, model can be fine-tuned on cross-entropy objective as in \S\ref{sec:prelimMP}.

\subsection{Creative Add-ons for node classification (NC) models}
\label{sec:creative}
\textbf{Label re-use} (LR):
Let
$\widehat{\mathbf{M}}^\textrm{(NC)}_\textrm{LR} \triangleq \left[
\begin{matrix}[c;{1pt/1pt}c;{1pt/1pt}c]
\widehat{\mathbf{M}}^{^\textrm{(NC)}} &
(\widehat{\mathbf{A}} - (\mathbf{D} + \mathbf{I})^{{-1}}) \mathbf{Y}_{_\textrm{[train]}} &
(\widehat{\mathbf{A}} - (\mathbf{D} + \mathbf{I})^{{-1}})^{2} \mathbf{Y}_{_\textrm{[train]}}  \end{matrix} \right]$.\newline
This follows the motivation of \citet{wang2020unifying, huang2021combining, wang2021bag} and their empirical results on ogbn-arxiv dataset,
where $\mathbf{Y}_\textrm{[train]} \in \mathbb{R}^{n \times y}$ contains one-hot vectors at rows corresponding to labeled nodes but contain zero vectors for unlabeled (test) nodes.
Our scheme is similar to concatenating $\mathbf{Y}_\textrm{[train]}$ into $\mathbf{X}$, but with care to prevent label leakage from row $i$ of $\mathbf{Y}$ to row $i$ of $\widehat{\mathbf{M}}$, as we zero-out the diagonal of the adjacency multiplied by $\mathbf{Y}_\textrm{[train]}$.

\textbf{Pseudo-Dropout} (PD): 
Dropout \citep{srivastava14dropout}
reduces overfitting of models.
It can be related to \textit{data augmentation},
as each example is presented multiple times.
At each time, it appears with a different set of \textit{dropped-out features} -- input or latent feature values, chosen at random, get replaced with zeros.
As such, we can replicate the design matrix as:
${{\widehat{\mathbf{M}}}}^\top \leftarrow
\left[\begin{matrix}[c;{1pt/1pt}c]
\widehat{\mathbf{M}}^\top & \textrm{PD}(\widehat{\mathbf{M}})^\top 
\end{matrix}\right]$.
This row-wise concatenation
maintains the width of $\widehat{\mathbf{M}}$
and therefore the number of model parameters.

In the above add-ons, concatenations, as well as PD, can be implicit or explicit (see \S A.3).








\section{Analysis \& Discussion}
\label{sec:analysis}

\begin{theorem}
\label{theorem:min_norm}
{\normalfont (Min. Norm)}
If system $\widehat{\mathbf{M}} \widehat{\mathbf{W}} = \mathbf{Y}$ is underdetermined\footnote{E.g., if the number of labeled examples i.e. height of $\mathbf{M}$ and $\mathbf{Y}$ is smaller than the width of $\mathbf{M}$.} with rows of $\widehat{\mathbf{M}}$ being linearly independent,
then
solution space  $\widehat{\mathcal{W}}^* = \Big\{\widehat{\mathbf{W}} \ \Big| \  \widehat{\mathbf{M}} \widehat{\mathbf{W}} = \mathbf{Y} \Big\}$ has
infinitely many solutions.
%
Then, for $k\geq \textrm{rank}(\widehat{\mathbf{M}})$, matrix $\widehat{\mathbf{W}}^*$, recovered by Eq.\ref{eq:wstar} satisfies:
$
\widehat{\mathbf{W}}^* = \mathop{\mathrm{argmin}}_{\widehat{\mathbf{W}}\in \widehat{\mathcal{W}}^* } ||\widehat{\mathbf{W}}||_F^2 
$.
\end{theorem}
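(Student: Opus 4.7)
The plan is to exploit the standard orthogonal decomposition of the solution space of an underdetermined consistent linear system, using the SVD of $\widehat{\mathbf{M}}$ to make the geometry explicit. Write $\widehat{\mathbf{M}} = \mathbf{U}\mathbf{S}\mathbf{V}^\top$ with $k \geq r := \mathrm{rank}(\widehat{\mathbf{M}})$ (so the factorization is exact by the Eckart--Young result cited in \S\ref{sec:prelimsvd}). Since the rows of $\widehat{\mathbf{M}}$ are linearly independent, $r$ equals the number of rows, which forces $\mathbf{U}$ to be a square orthogonal matrix on its non-degenerate block; equivalently $\mathbf{U}\mathbf{U}^\top = \mathbf{I}$ restricted to the row-index space. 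I would start by using this to verify that the candidate $\widehat{\mathbf{W}}^\star := \mathbf{V}\mathbf{S}^+\mathbf{U}^\top \mathbf{Y}$ is indeed a solution: $\widehat{\mathbf{M}}\widehat{\mathbf{W}}^\star = \mathbf{U}\mathbf{S}\mathbf{V}^\top\mathbf{V}\mathbf{S}^+\mathbf{U}^\top\mathbf{Y} = \mathbf{U}\mathbf{U}^\top\mathbf{Y} = \mathbf{Y}$, so $\widehat{\mathbf{W}}^\star \in \widehat{\mathcal{W}}^\star$.

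Next I would parametrize the solution set. Any other $\widehat{\mathbf{W}} \in \widehat{\mathcal{W}}^\star$ can be written as $\widehat{\mathbf{W}} = \widehat{\mathbf{W}}^\star + \mathbf{N}$ with $\widehat{\mathbf{M}}\mathbf{N}=0$, i.e.\ every column of $\mathbf{N}$ lies in $\ker(\widehat{\mathbf{M}})$. From the SVD, $\widehat{\mathbf{M}}\mathbf{n}=0 \iff \mathbf{S}\mathbf{V}^\top \mathbf{n} = 0 \iff \mathbf{V}^\top \mathbf{n} = 0$ on the non-zero block of $\mathbf{S}$, so the columns of $\mathbf{N}$ are orthogonal to the columns of $\mathbf{V}$ (equivalently, to the row span of $\widehat{\mathbf{M}}$).

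The key algebraic computation is then the Pythagorean identity in Frobenius norm. Since the columns of $\widehat{\mathbf{W}}^\star = \mathbf{V}(\mathbf{S}^+\mathbf{U}^\top\mathbf{Y})$ all lie in the column span of $\mathbf{V}$, while columns of $\mathbf{N}$ are orthogonal to that span, one gets
\begin{equation*}
\langle \widehat{\mathbf{W}}^\star, \mathbf{N}\rangle_F \;=\; \mathrm{tr}\!\left(\mathbf{Y}^\top \mathbf{U}\mathbf{S}^+\mathbf{V}^\top \mathbf{N}\right) \;=\; 0,
\end{equation*}
so $\|\widehat{\mathbf{W}}^\star + \mathbf{N}\|_F^2 = \|\widehat{\mathbf{W}}^\star\|_F^2 + \|\mathbf{N}\|_F^2 \geq \|\widehat{\mathbf{W}}^\star\|_F^2$, with equality iff $\mathbf{N}=0$. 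This gives the minimum-norm claim.

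The only mildly delicate point, and where I expect to spend the most care, is the bookkeeping when $k$ strictly exceeds $r$: the truncated factors may then include a zero block in $\mathbf{S}$, and one has to make sure that the identities $\widehat{\mathbf{M}}\widehat{\mathbf{W}}^\star = \mathbf{Y}$ and $\mathbf{V}^\top\mathbf{N}=0$ are applied only on the rank-$r$ block (where $\mathbf{S}^+$ acts non-trivially) and do not inadvertently rely on $\mathbf{V}$ being square. Making this clean probably warrants first reducing to the thin SVD on the rank-$r$ block and then noting that the argument depends only on the subspace $\mathrm{col}(\mathbf{V}_{[:,\,1:r]}) = \mathrm{rowspan}(\widehat{\mathbf{M}})$, which is intrinsic and independent of $k$. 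With that reduction in place, the three steps above go through verbatim.
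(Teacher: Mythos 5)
Your proof is correct and follows essentially the same route as the paper's: exhibit $\widehat{\mathbf{W}}^\star$ as a particular solution lying in the row space of $\widehat{\mathbf{M}}$, note that any other solution differs by a null-space element orthogonal to it, and conclude by the Pythagorean identity. The only differences are cosmetic --- the paper expresses the pseudoinverse as $\widehat{\mathbf{M}}^\top(\widehat{\mathbf{M}}\widehat{\mathbf{M}}^\top)^{-1}$ and reduces to column vectors of $\mathbf{Y}$, whereas you work directly with the SVD factors and the trace inner product (and you are right to flag, as the paper does not, the thin-SVD reduction needed when $k$ strictly exceeds $\mathrm{rank}(\widehat{\mathbf{M}})$).
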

Theorem \ref{theorem:min_norm} implies that, even though one can design a wide $\widehat{\mathbf{M}}^\textrm{(NC)}$ (Eq.\ref{eq:h_jkn}), \textit{i.e.}, with many layers, the recovered parameters with least norm should be less prone to overfitting. Recall that this is the goal of L2 regularization. Analysis and proofs are in the Appendix.

\begin{theorem}
\label{theorem:initialization}
{\normalfont (Non-linear init)} The initialization Eq.~\ref{eq:splitrelu_mp_init}\&\ref{eq:splitrelu_output_init} yields
 $\mathbf{H}^\textnormal{(NC)}_\textnormal{linearized} = \mathbf{H} \big|_{\theta \leftarrow \textnormal{via Eq. \ref{eq:splitrelu_mp_init}\&\ref{eq:splitrelu_output_init} }} $.
\end{theorem}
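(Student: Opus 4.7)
The plan is to reduce the theorem to the single pointwise identity $[x]_+ - [-x]_+ = x$ (valid for all real $x$), which promotes the Split-ReLu to a linear map precisely on the hyperplane $\mathbf{W}_\textrm{(p)} = -\mathbf{W}_\textrm{(n)}$. Since both halves of the initialization (Eq.~\ref{eq:splitrelu_mp_init} and Eq.~\ref{eq:splitrelu_output_init}) land on exactly this hyperplane, every Split-ReLu in the deep network degenerates to an ordinary linear layer at initialization, after which the argument becomes pure bookkeeping of block matrix products.

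I would proceed in three short steps. \textbf{Step 1 (Split-ReLu collapses):} Apply the identity entry-wise to show
\[
\textrm{SplitReLu}(\mathbf{X}; \mathbf{W}, -\mathbf{W}) = [\mathbf{X}\mathbf{W}]_+ - [-\mathbf{X}\mathbf{W}]_+ = \mathbf{X}\mathbf{W},
\]
for any $\mathbf{X}, \mathbf{W}$. \textbf{Step 2 (MP recursion):} Substitute $\mathbf{W}_\textrm{(p)}^{(l)} = \mathbf{I}$ and $\mathbf{W}_\textrm{(n)}^{(l)} = -\mathbf{I}$ into Eq.~\ref{eq:splitrelu_mp}; Step 1 with $\mathbf{W} = \mathbf{I}$ and input $\widehat{\mathbf{A}}\mathbf{H}^{(l)}$ gives $\mathbf{H}^{(l+1)} = \widehat{\mathbf{A}}\mathbf{H}^{(l)}$. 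A trivial induction on $l$, starting from $\mathbf{H}^{(0)} = \mathbf{X}$, then yields $\mathbf{H}^{(l)} = \widehat{\mathbf{A}}^l\mathbf{X}$ for every $l \in \{0, 1, \dots, L\}$. \textbf{Step 3 (output assembly):} Substitute $\mathbf{W}_\textrm{(op)}^{(l)} = \widehat{\mathbf{W}}^*_{[dl\,:\,d(l+1)]}$ and $\mathbf{W}_\textrm{(on)}^{(l)} = -\widehat{\mathbf{W}}^*_{[dl\,:\,d(l+1)]}$ into Eq.~\ref{eq:splitrelu_output}; Step 1 collapses each summand to $\mathbf{H}^{(l)}\widehat{\mathbf{W}}^*_{[dl\,:\,d(l+1)]} = \widehat{\mathbf{A}}^l\mathbf{X}\,\widehat{\mathbf{W}}^*_{[dl\,:\,d(l+1)]}$, giving
\[
\mathbf{H}\big|_{\theta\leftarrow \textrm{init}} \;=\; \sum_{l=0}^{L} \widehat{\mathbf{A}}^l\mathbf{X}\,\widehat{\mathbf{W}}^*_{[dl\,:\,d(l+1)]}.
\]

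To finish, I would recall the well-known identity for a horizontally block-concatenated matrix times a vertically block-partitioned one: if $\mathbf{B} = [\mathbf{B}_0 \mid \mathbf{B}_1 \mid \dots \mid \mathbf{B}_L]$ with $\mathbf{B}_l \in \mathbb{R}^{n\times d}$ and $\mathbf{C} \in \mathbb{R}^{(d(L+1))\times y}$ is partitioned into row blocks $\mathbf{C}_l = \mathbf{C}_{[dl\,:\,d(l+1)]}$, then $\mathbf{B}\mathbf{C} = \sum_{l=0}^{L} \mathbf{B}_l\mathbf{C}_l$. Applying this with $\mathbf{B}_l = \widehat{\mathbf{A}}^l\mathbf{X}$ and $\mathbf{C} = \widehat{\mathbf{W}}^*$ recovers Eq.~\ref{eq:h_jkn} with $\widehat{\mathbf{W}} = \widehat{\mathbf{W}}^*$, i.e. $\mathbf{H}^\textrm{(NC)}_\textrm{linearized} = \widehat{\mathbf{M}}^\textrm{(NC)} \widehat{\mathbf{W}}^*$, matching the right-hand side of the theorem.

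There is no real obstacle here; the whole argument is a verification. The only place that needs mild care is making sure the indexing convention in the row-slice $\widehat{\mathbf{W}}^*_{[dl\,:\,d(l+1)]}$ aligns with the column-block structure of $\widehat{\mathbf{M}}^\textrm{(NC)} = [\mathbf{X}\mid \widehat{\mathbf{A}}\mathbf{X} \mid \dots \mid \widehat{\mathbf{A}}^L\mathbf{X}]$; once that is fixed, the identity in Step 1 does all the work and everything else is a single line of block arithmetic.
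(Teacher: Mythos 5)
Your proposal is correct and follows essentially the same route as the paper's proof: collapse each Split-ReLu to a linear map at the initialization hyperplane, induct to get $\mathbf{H}^{(l)} = \widehat{\mathbf{A}}^l\mathbf{X}$, and reassemble the output sum as the block product $\widehat{\mathbf{M}}^\textrm{(NC)}\widehat{\mathbf{W}}^*$. The only cosmetic difference is that you invoke the pointwise identity $[x]_+ - [-x]_+ = x$ directly, whereas the paper derives the same collapse via an indicator-function rewriting of the ReLu; the two are equivalent.
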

Theorem \ref{theorem:initialization} implies that the deep (nonlinear) model is the same as the linear model, at the initialization of $\theta$ (per Eq.~\ref{eq:splitrelu_mp_init}\&\ref{eq:splitrelu_output_init}, using $\mathbf{\widehat{W}}^*$ as Eq.~\ref{eq:wstar}). Cross-entropy objective can then fine-tune $\theta$.




This end-to-end process, of (i) computing SVD bases  and (ii) training the network $f_\theta$ on singular values, \textit{advances} SOTA on competitve benchmarks,
with
(i) converging (quickly) to a unique solution and (ii) containing merely a few parameters $\theta$ -- see \S\ref{sec:applications}.

\section{Applications \& Experiments}
\label{sec:applications}
\label{sec:experiments}
We download and experiment on 9 datasets summarized in
 Table \ref{table:datasets}.

We attempt link prediction (LP) tasks on smaller graph datasets (< 1 million edges) of:
Protein-Protein Interactions (PPI) graph from \citet{grover2016node2vec};
as well as ego-Facebook (FB), AstroPh, HepTh from Stanford SNAP \citep{snapnets}.
For these datasets, we use the train-test splits of \citet{wys}.
We also attempt semi-supervised node classification (SSC) tasks on smaller graphs of Cora, Citeseer, Pubmed, all obtained from Planetoid \citep{planetoid}.
For these smaller datasets, we only train and test using the SVD basis (without finetuning)

Further, we attempt on slightly-larger datasets  (> 1 million edges) from Stanford's Open Graph Benchmark \citep[OGB,][]{ogb}. We use the official train-test-validation splits and evaluator of OGB. We attempt LP and SSC, respectively, on 
Drug Drug Interactions (ogbl-DDI) and
ArXiv citation network (ogbn-ArXiv). For these larger datasets, we use the SVD basis as an initialization that we finetune, as described in \S\ref{sec:finetuning}. For time comparisons, we train all models on Tesla K80.

\begin{table}[t]
\caption{Dataset Statistics}
\label{table:datasets}
\centering{
\begin{tabular}{r rl  rl  l c  c c}
\toprule
\textbf{Dataset} & \multicolumn{2}{c}{\textbf{Nodes}} & \multicolumn{2}{c}{\textbf{Edges}} & \multicolumn{1}{c}{\textbf{Source}} & \multicolumn{1}{c}{\textbf{Task}} & \multicolumn{1}{c}{$\mathbf{X}$}  \\
\cmidrule(lr){1-1}\cmidrule(lr){2-3}\cmidrule(lr){4-5}\cmidrule(lr){6-6}\cmidrule(lr){7-7}\cmidrule(lr){8-8}
PPI & 3,852 &\hspace{-0.3cm}proteins & 20,881 &\hspace{-0.3cm}chem. interactions & \href{https://snap.stanford.edu/node2vec/#datasets}{node2vec} & LP & \xmark \\
FB & 4,039  &\hspace{-0.3cm}users & 88,234 &\hspace{-0.3cm}friendships & SNAP & LP & \xmark \\
AstroPh & 17,903 &\hspace{-0.3cm}researchers & 197,031 &\hspace{-0.3cm}co-authorships & SNAP & LP & \xmark \\
HepTh & 8,638 &\hspace{-0.3cm}researchers & 24,827 &\hspace{-0.3cm}co-authorships & SNAP &LP & \xmark \\
Cora & 2,708 &\hspace{-0.3cm}articles & 5,429 &\hspace{-0.3cm}citations & Planetoid & SSC & \cmark\\
Citeseer &  3,327 &\hspace{-0.3cm}articles &\hspace{-0.3cm}4,732 &\hspace{-0.3cm}citations & Planetoid & SSC & \cmark \\
Pubmed & 19,717 &\hspace{-0.3cm}articles &\hspace{-0.3cm}44,338 &\hspace{-0.3cm}citations & Planetoid & SSC & \cmark\\
ogbn-ArXiv &  169,343&\hspace{-0.3cm}papers &1,166,243&\hspace{-0.3cm}citations & OGB & SSC & \cmark\\
ogbl-DDI &  4,267&\hspace{-0.3cm}drugs &1,334,889 &\hspace{-0.3cm}interactions & OGB & LP & \cmark\\
\bottomrule
\end{tabular}
}
\vspace{-0.4cm}
\end{table}
\begin{table}[t]
\caption{Test accuracy (\& train time) on citation graphs for task: \textit{semi-supervised node classification}.}
\label{table:resultsPlanetoid}
\centering{
\begin{tabular}{r lr  lr  lr lr}
\toprule
\multicolumn{1}{r}{\textbf{Graph dataset:}} \hspace{-1cm} & \multicolumn{2}{c}{\textbf{Cora}}
& \multicolumn{2}{c}{\textbf{Citeseer}}
& \multicolumn{2}{c}{\textbf{Pubmed}} \\
\cmidrule(lr){2-3}
\cmidrule(lr){4-5}
\cmidrule(lr){6-7}
\multicolumn{1}{l}{\hspace{-0.1cm}\textbf{Baselines:}} & acc & tr.time & acc & tr.time & acc & tr.time  \\
\cmidrule(lr){1-1}
\cmidrule(lr){2-2}
\cmidrule(lr){3-3}
\cmidrule(lr){4-3}
\cmidrule(lr){5-4}
\cmidrule(lr){6-6}
\cmidrule(lr){7-7}
Planetoid & 75.7 & (13s) & 64.7 & (26s) & 77.2 & (25s) \\
GCN  &  81.5 & (4s)  & 70.3 & (7s) & 79.0 & (83s) \\
GAT   & 83.2 & (1m)    & 72.4 & (3m)  & 77.7 & (6m) \\
MixHop & 81.9 & (26s) & 71.4 & (31s) & 80.8 & (1m)  \\
GCNII & 85.5 & (2m)  & 73.4 & (3m)  & 80.3 & (2m) \\
\cmidrule(lr){1-1}
\cmidrule(lr){2-2}
\cmidrule(lr){3-3}
\cmidrule(lr){4-3}
\cmidrule(lr){5-4}
\cmidrule(lr){6-6}
\cmidrule(lr){7-7}
\multicolumn{1}{l}{\hspace{-0.1cm}\textbf{Our models:}} & acc & tr.time & acc & tr.time & acc & tr.time  \\
\cmidrule(lr){1-1}
\cmidrule(lr){2-2}
\cmidrule(lr){3-3}
\cmidrule(lr){4-3}
\cmidrule(lr){5-4}
\cmidrule(lr){6-6}
\cmidrule(lr){7-7}

$\textrm{iSVD}_{100}({\widehat{\mathbf{M}}^\textrm{(NC)}})$
  & 82.0 {\footnotesize $\pm$0.13} \hspace{-0.45cm} & (0.1s)
  & 71.4 {\footnotesize $\pm$0.22} \hspace{-0.45cm} & (0.1s)
  & 78.9 {\footnotesize $\pm$0.31} \hspace{-0.45cm} & (0.3s) \\
  + dropout (\S\ref{sec:creative})
  & 82.5 {\footnotesize $\pm$0.46} \hspace{-0.45cm} & (0.1s)
  & 71.5 {\footnotesize $\pm$0.53} \hspace{-0.45cm} & (0.1s)
  & 78.9 {\footnotesize $\pm$0.59} \hspace{-0.45cm} & (0.2s) \\
\bottomrule
\end{tabular}{}
}
\vspace{-0.4cm}
\end{table}
%
%
\begin{table}[t]
\caption{Test ROC-AUC (\& train time) on Stanford SNAP graphs for task: \textit{link prediction}.}
\label{table:snapAUC}
\centering
{
\begin{tabular}{r  lr  lr  lr  lr}
\toprule
\multicolumn{1}{r}{\textbf{Graph dataset:}} \hspace{-1cm} & \multicolumn{2}{c}{\textbf{FB}}
& \multicolumn{2}{c}{\textbf{AstroPh}}
& \multicolumn{2}{c}{\textbf{HepTh}}
& \multicolumn{2}{c}{\textbf{PPI}}
  \\
  \cmidrule(lr){2-3}
  \cmidrule(lr){4-5}
  \cmidrule(lr){6-7}
    \cmidrule(lr){8-9}
\multicolumn{1}{l}{\hspace{-0.1cm}\textbf{Baselines:}} & AUC &  \hspace{-0.2cm} tr.time & AUC &\hspace{-0.2cm}  tr.time & AUC & \hspace{-0.2cm}  tr.time & AUC & \hspace{-0.2cm} tr.time \\  
  \cmidrule(lr){1-1}
  \cmidrule(lr){2-2}
  \cmidrule(lr){3-3}
  \cmidrule(lr){4-3}
  \cmidrule(lr){5-4}
  \cmidrule(lr){6-6}
  \cmidrule(lr){7-7}
  \cmidrule(lr){8-8}
   \cmidrule(lr){9-9}

WYS & 99.4 & (54s)  & 97.9 & (32m)  & 93.6 & (4m)  & 89.8 & (46s) \\
n2v & 99.0 & (30s)  & 97.8 & (2m)  & 92.3 & (55s)  & 83.1 & (27s) \\
NetMF & 97.6 & (5s)  & 96.8 & (9m)  & 90.5 & (72s) & 73.6  & (7s) \\
$\textrm{Net}\widetilde{\textrm{MF}}$ & 97.0 & (4s)  & 81.9 & (4m) & 85.0 & (48s)  & 63.6 & (10s) \\
  \cmidrule(lr){2-3}
  \cmidrule(lr){4-5}
  \cmidrule(lr){6-7}
    \cmidrule(lr){8-9}
\multicolumn{1}{l}{\hspace{-0.1cm}\textbf{Our models:}}  & AUC &  \hspace{-0.2cm} tr.time & AUC &\hspace{-0.2cm}  tr.time & AUC & \hspace{-0.2cm}  tr.time & AUC & \hspace{-0.2cm} tr.time \\ 
  \cmidrule(lr){1-1}
  \cmidrule(lr){2-2}
  \cmidrule(lr){3-3}
  \cmidrule(lr){4-3}
  \cmidrule(lr){5-4}
  \cmidrule(lr){6-6}
  \cmidrule(lr){7-7}
  \cmidrule(lr){8-8}
   \cmidrule(lr){9-9}
   
\hspace{-0.2cm} $\textrm{iSVD}_{32}({\widehat{\mathbf{M}}^\textrm{(NE)}})$
  & 99.1 {\footnotesize $\pm$1e-6}\hspace{-0.4cm} & (0.2s)
  & 94.4 {\footnotesize $\pm$4e-4}\hspace{-0.4cm} & (0.5s)
  & 90.5 {\footnotesize $\pm$0.1}\hspace{-0.4cm} & (0.1s)
  & 89.3 {\footnotesize $\pm$0.01}\hspace{-0.4cm} & (0.1s) \hspace{-0.2cm} \\
\hspace{-0.2cm} $\textrm{iSVD}_{256}({\widehat{\mathbf{M}}^\textrm{(NE)}})$
    & 99.3 {\footnotesize $\pm$9e-6}\hspace{-0.4cm} & (2s) 
    & 98.0 {\footnotesize $\pm$0.01}\hspace{-0.4cm} & (7s)
    & 90.1 {\footnotesize $\pm$0.54}\hspace{-0.4cm} & (2s)
    & 89.3 {\footnotesize $\pm$0.48}\hspace{-0.4cm} & (1s) \hspace{-0.2cm} \\
\bottomrule
\end{tabular}{}
}
\vspace{-0.4cm}
\end{table}
\begin{table}[t]
\vspace{-0.2cm}
\caption{Test Hits@20 for link prediction over Drug-Drug Interactions Network (ogbl-ddi).}
\label{table:ogb1-ddi}
\centering{
\begin{tabular}{r l  l r }
\toprule
\multicolumn{4}{c}{\textbf{Graph dataset:}  \hspace{1cm} \textbf{ogbl-DDI}}  \\
\cmidrule(lr){1-4}
\multicolumn{2}{l}{ \textbf{Baselines:} } & \multicolumn{1}{c}{{HITS@20}} & \multicolumn{1}{c}{{tr.time}}  \\
\cmidrule(lr){1-2}
\cmidrule(lr){3-3}
\cmidrule(lr){4-4}
DEA+JKNet & \hspace{-0.1cm}\citep{yang2021globallocal}\hspace{-0.5cm} &	\hspace{-0cm}76.72 {\small$\pm$2.65} & (60m) \\
LRGA+n2v & \hspace{-0.1cm}\citep{hsu2021cs224w}\hspace{-0.5cm}&	\hspace{-0cm}73.85 {\small$\pm$8.71} & (41m) \\
MAD & \hspace{-0.1cm}\citep{luo2021memoryassociated}\hspace{-0.5cm} &\hspace{-0cm}67.81 {\small$\pm$2.94} & (2.6h) \\
LRGA+GCN & \hspace{-0.1cm}\citep{puny2020global}\hspace{-0.3cm}&	\hspace{-0cm}62.30 {\small$\pm$9.12} & (10m) \\
GCN+JKNet & \hspace{-0.1cm}\citep{jknet}\hspace{-0.3cm} & \hspace{-0cm}60.56 {\small$\pm$8.69} & (21m) \\
\cmidrule(lr){1-2}
\cmidrule(lr){3-3}
\cmidrule(lr){4-4}
\multicolumn{2}{l }{ \textbf{Our models:} } & \multicolumn{1}{c}{{HITS@20}}  & \multicolumn{1}{c}{{tr.time}}  \\
\cmidrule(lr){1-2}
\cmidrule(lr){3-3}
\cmidrule(lr){4-4}
\multicolumn{1}{l}{(a) $\textrm{iSVD}_{100}(\widehat{\mathbf{M}}^\textrm{(NE)})$} &  \multicolumn{1}{r}{(\S\ref{sec:cvxNE}, Eq.~\ref{eq:svd_train_NE})} & 67.86 {\small$\pm$0.09} & (6s) \\
\multicolumn{1}{l}{(b)\hspace{0.1cm} + finetune $f_{\mu, s}(\mathbf{S})$ } & \multicolumn{1}{r}{(\S\ref{sec:finetuneNE}, Eq.~\ref{eq:gaussiankernel} \& \ref{eq:finetuneNE}; sets $\mu = 1.15$)} & 79.09 {\small$\pm$0.18} & (24s)  \\
\multicolumn{1}{l}{(c)\hspace{0.1cm} + update $\mathbf{U}$,$\mathbf{S}$,$\mathbf{V}$ } & (on validation, keeps $f_{\mu, s}$ fixed) & 84.09 {\small$\pm$0.03}  & (30s) \\
\bottomrule
\end{tabular}{}
}
\vspace{-0.5cm}
\end{table}




\subsection{Test performance \& runtime on smaller datasets from: Planetoid \& Stanford SNAP}

For SSC over Planetoid's datasets, both $\mathbf{A}$ and $\mathbf{X}$ are given.
Additionally, only a handful of nodes are labeled. The goal is to classify the unlabeled test nodes. Table \ref{table:resultsPlanetoid} summarizes the results.
For \textbf{baselines}, we download code of GAT \citep{gat}, MixHop \citep{mixhop}, GCNII \citep{GCNII} and re-ran them with instrumentation to record training time.
However, for baselines Planetoid \citep{planetoid} and GCN \citep{kipf}, we copied numbers from \citep{kipf}.
For \textbf{our models}, the row labeled $\textrm{iSVD}_{100}(\widehat{\mathbf{M}}^\textrm{(NC)})$, we run our implicit SVD twice per graph.
The first run  incorporates structural information:
we (implicitly) construct $\widehat{\mathbf{M}}^\textrm{(NE)}$ with $\lambda=0.05$ and $C=3$, then obtain $\mathbf{L}, \mathbf{R} \leftarrow \textrm{SVD}_{64}(\widehat{\mathbf{M}}^\textrm{(NE)})$, per Eq.~\ref{eq:svd_train_NE}. Then, we concatenate 
$\mathbf{L}$ and $\mathbf{R}$ into $\mathbf{X}$. Then, we PCA the resulting matrix to 1000 dimensions, which forms our new $\mathbf{X}$.
The second SVD run is to train the classification model parameters $\widehat{\mathbf{W}}^{*}$.
From the PCA-ed $\mathbf{X}$,
we construct the implicit matrix $\widehat{\mathbf{M}}^\textrm{(NC)}$ with $L=15$ layers and obtain
$\widehat{\mathbf{W}}^{*} =  \mathbf{V}  \mathbf{S}^{+}  \mathbf{U}^\top \mathbf{Y}_\textrm{[train]} $
with $\mathbf{U},   \mathbf{S},  \mathbf{V} \leftarrow \textrm{SVD}_{100}(\widehat{\mathbf{M}}^\textrm{(NC)}_\textrm{[train]} )$, per in RHS of Eq.~\ref{eq:wstar}.
For our second ``+ dropout'' model variant, we (implicit) augment the data by
$ {\widehat{\mathbf{M}}^{\textrm{(NC)}^\top}} \leftarrow \big[{\widehat{\mathbf{M}}^{\textrm{(NC)}^\top} } \mid  {\textrm{PD}( \widehat{\mathbf{M}}^{\textrm{(NC)}} )^\top }  \big]$,
update indices ${[\textrm{train}]} \leftarrow \big[ \textrm{train} \mid \textrm{train}  \big]^\top$ then similarly learn as: $\textrm{SVD}_{100}(\widehat{\mathbf{M}}^\textrm{(NC)}_\textrm{[train]}) \rightarrow \widehat{\mathbf{W}}^{*} =  \mathbf{V}  \mathbf{S}^{+}  \mathbf{U}^\top \mathbf{Y}_\textrm{[train]} $
\textbf{Discussion:}
Our method is competitive yet trains faster than SOTA. In fact, the only method that reliably beats ours on all dataset is GCNII, but its training time is about one-thousand-times longer.



For LP over SNAP and node2vec datasets, the training adjacency $\mathbf{A}$ is given but not $\mathbf{X}$. The split includes test positive and negative edges, which are used to measure a ranking metric: ROC-AUC, where the metric increases when test positive edges are ranked above than negatives. 
Table \ref{table:snapAUC} summarizes the results.
For 
\textbf{baselines}, we download code of WYS \citep{wys};
we use the efficient implementation of  PyTorch-Geometric \citep{pyg} for node2vec (n2v)
and we download code of \citet{qiu2018network} and run it with their two variants, denoting their
first variant as NetMF (for \textit{exact}, explicitly computing the design matrix) as and their second variant as $\textrm{Net}\widetilde{\textrm{MF}}$  (for \textit{approximate}, sampling matrix entry-wise)
-- their code runs SVD after computing either variant.
For the first of \textbf{our models}, we compute $\mathbf{U}, \mathbf{S}, \mathbf{V} \leftarrow \textrm{SVD}_{32}({\widehat{\mathbf{M}}^\textrm{(NE)}})$ and score every test edge as 
$\mathbf{U}_i^\top \mathbf{S} \mathbf{V}_j$. For the second, we first run $\textrm{SVD}_{256}$ on \textbf{half} of the training edges, determine the ``\textit{best}'' rank $\in \{8, 16, 32, 128, 256\}$ by measuring the AUC on the remaining half of training edges, then using this best rank, recompute the SVD on the entire training set, then finally score the test edges.
\textbf{Discussion}: Our method is competitive on SOTA while training much faster.
Both NetMF and WYS explicitly calculate a dense matrix before factorization.
On the other hand, $\textrm{Net}\widetilde{\textrm{MF}}$ approximates the matrix entry-wise, trading accuracy for training time. In our case, we have the best of both worlds: using our symbolic representation and SVD implementation, we can decompose the design matrix while only implicitly representing it, as good as if we had explicitly calculated it.


\subsection{Experiments on Stanford's OGB datasets}
\label{sec:exp_ogb}
We summarize experiments ogbl-DDI and ogbn-ArXiv, respectively, in Tables \ref{table:ogb1-ddi} and
\ref{table:ogbn-arxiv}. For \textbf{baselines}, we copy numbers from the \href{https://ogb.stanford.edu/docs/leader_overview/}{public leaderboard}, where the competition is fierce.
We then follow links on the leaderboard to download author's code, that we re-run, to measure the training time.
\textbf{For our models on ogbl-DDI,}
we (a) first calculate $\mathbf{U}, \mathbf{S}, \mathbf{V} \leftarrow \textrm{SVD}_{100}(\widehat{\mathbf{M}}^\textrm{(NE)})$
built only from training edges
and score test edge $(i, j)$ using $\mathbf{U}_i^\top \mathbf{S} \mathbf{V}_j$.
Then, we (b) then finetune $f_{\mu, s}$ (per \S\ref{sec:finetuneNE}, Eq.~\ref{eq:gaussiankernel} \& \ref{eq:finetuneNE})) for \textbf{only a single epoch}
and score using $f_{\mu, s}(i, j)$.
Then, we (c) update the SVD basis to include edges from the validation partition and also score using $f_{\mu, s}(i, j)$. We report the results for the three steps. For the last step,
the rules of OGB allows using the validation set for training, but only after the hyperparameters are finalized.
The SVD has no hyperparameters (except the rank, which was already determined by the first step).
More importantly, this simulates a realistic situation:
it is cheaper to obtain SVD (of an implicit matrix) than back-propagate through a model.
For a time-evolving graph, one could run the SVD more often than doing gradient-descent epochs on a model.
For \textbf{our models on ogbn-ArXiv}, we (a) compute $\mathbf{U}, \mathbf{S}, \mathbf{V} \leftarrow \textrm{SVD}_{250}(\widehat{\mathbf{M}}^\textrm{(NC)}_\textrm{LR})$ where the implicit matrix is defined in \S\ref{sec:creative}.
We (b) repeat this process where we replicate $\widehat{\mathbf{M}}^\textrm{(NC)}_\textrm{LR}$ once: in the second replica, we replace the $\mathbf{Y}_\textrm{[train]}$ matrix with zeros (as-if, we drop-out the label with 50\% probability). We (c)  repeat the process where we concatenate two replicas of $\widehat{\mathbf{M}}^\textrm{(NC)}_\textrm{LR}$ into the design matrix, each with different dropout seed. We
(d) fine-tune the last model over 15 epochs using stochastic GTTF \citep{gttf}. \textbf{Discussion}: Our method competes or sets SOTA, while training much faster.
\newline
\underline{\textbf{Time improvements}}: We replaced the recommended orthonormalization of \citep{halko2009svd} from QR decomposition, to Cholesky decomposition (\S A.2).
Further, we implemented \textit{caching} to avoid computing sub-expressions if already calculated (\S A.3.4). Speed-ups are shown in Fig.~\ref{fig:time}.

%
%
%

\begin{minipage}{0.64\linewidth}
\captionof{table}{Test classification accuracy over ogbn-arxiv.}
\label{table:ogbn-arxiv}
\centering{
\begin{tabular}{r l l  r}
\toprule
\multicolumn{4}{c}{ \hspace{-1.3cm} \textbf{Graph dataset:}  \hspace{1cm} \textbf{ogbn-ArXiv}} \\
\cmidrule(lr){1-4}
\multicolumn{2}{l }{ \textbf{Baselines:} } & \multicolumn{1}{c}{{accuracy}} & \multicolumn{1}{c}{{tr.time}}  \\
\cmidrule(lr){1-2}
\cmidrule(lr){3-3}
\cmidrule(lr){4-4}
GAT+LR+KD &  \hspace{-0.3cm}\citep{gat_kd}\hspace{-0.5cm} &	74.16 {\footnotesize$\pm$0.08} & (6h) \\
GAT+LR  &\hspace{-0.3cm}\citep{gat_topoloss_ogb}\hspace{-0.5cm} & 73.99	 {\small$\pm$0.12} & (3h) \\
AGDN &\hspace{-0.3cm}\citep{sun2020adaptive}\hspace{-0.1cm} & 73.98 {\footnotesize$\pm$0.09} & (50m) \\
GAT+C\&S &  \hspace{-0.1cm}\citep{huang2021combining}\hspace{-0.1cm} & 73.86 {\small$\pm$0.14} & (2h) \\
GCNII &\hspace{-0.3cm}\citep{GCNII}\hspace{-0.5cm} & 72.74 {\footnotesize$\pm$0.16} & (3h) \\
\cmidrule(lr){1-2} \cmidrule(lr){3-3} \cmidrule(lr){4-4}
\multicolumn{2}{l }{ \textbf{Our models:} }
& \multicolumn{1}{c}{{accuracy}}
& \multicolumn{1}{c}{{tr.time}} \\
\cmidrule(lr){1-2}
\cmidrule(lr){3-3}
\cmidrule(lr){4-4}
\multicolumn{1}{l}{(a) $\textrm{iSVD}_{250}(\widehat{\mathbf{M}}^\textrm{(NC)}_\textrm{LR})$}\hspace{-2cm} &
\multicolumn{1}{r}{(\S\ref{sec:cvx:MP}, Eq.~\ref{eq:wstar})}    & 68.90 {\footnotesize $\pm$0.02}  & (1s) \\
\multicolumn{1}{l}{(b)\hspace{0.1cm} + dropout(LR)}\hspace{-2cm}  &\multicolumn{1}{r}{(\S\ref{sec:creative})}    & 69.34 {\footnotesize $\pm$0.02} & (3s)  \\
\multicolumn{1}{l}{(c)\hspace{0.1cm} + dropout($\widehat{\mathbf{M}}^\textrm{(NC)}_\textrm{LR})$}\hspace{-2cm}& \multicolumn{1}{r}{(\S\ref{sec:creative})}   &  71.95 {\footnotesize $\pm$0.03} & (6s)  \\
\multicolumn{1}{l}{(d)\hspace{0.1cm} + finetune $\mathbf{H}$}\hspace{-2cm}  & 
\multicolumn{1}{r}{(\S\ref{sec:finetuneNC}, Eq.~\ref{eq:splitrelu_mp}-\ref{eq:splitrelu_output_init})}  & 74.14 {\footnotesize $\pm$0.05} & (2m)  \\
\bottomrule
\end{tabular}{}
}
\end{minipage}
\begin{minipage}{0.34\linewidth}
\centering{
\includegraphics[width=3cm]{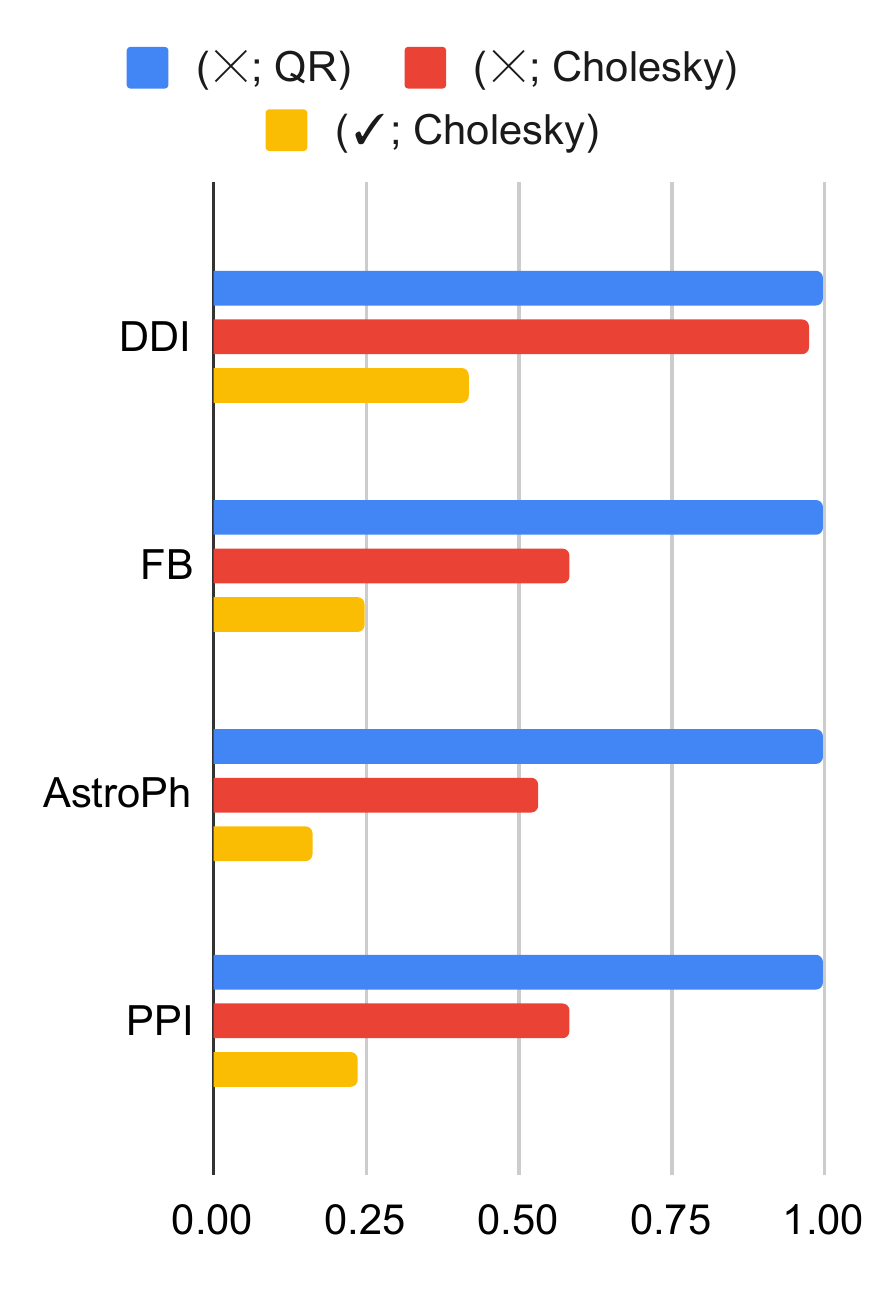}
}
\captionof{figure}{SVD runtime configs of (lazy caching; orthonormalization) as a ratio of SVD's common default (QR decomposition)}
\label{fig:time}
\end{minipage}

\section{Related work}

\textbf{Applications:}
SVD was used to project rows \& columns of matrix $\mathbf{M}$ onto an embedding space. $\mathbf{M}$ can be the Laplacian of a homogenous graph \citep{eigenmaps}, Adjacency of user-item bipartite graph \citep{koren2009recsys}, or stats \citep{deerwester1990-lsi, levy2014-neural} for word-to-document.
We differ:
our $\mathbf{M}$ is a \textit{function} of (leaf) matrices -- useful when $\mathbf{M}$ is expensive to store (e.g., quadratic).
While \citet{qiu2018network} circumvents this by entry-wise sampling the (otherwise $n^2$-dense) $\mathbf{M}$, our SVD implementation could decompose exactly $\mathbf{M}$ without calculating it.

\textbf{Symbolic Software Frameworks:} including
Theano \citep{theano}, 
TensorFlow \citep{tensorflow} and PyTorch \citep{pytorch},
allow chaining
operations
to compose a computation (directed acyclic) graph (DAG).
They can efficiently \textbf{run the DAG upwards} by
evaluating (all entries of) matrix $\mathbf{M} \in \mathbb{R}^{r \times c}$ at any DAG node.
Our DAG differs:
instead of calculating $\mathbf{M}$,
it provides product function
$u_\mathbf{M}(.) = \mathbf{M} \times .$
--
The \textbf{graph is run downwards} (reverse direction of edges).

\textbf{Matrix-free SVD:} For many matrices of interest, multiplying against $\mathbf{M}$ is computationally cheaper than explicitly storing $\mathbf{M}$ entry-wise.
As such, many researchers implement $\textrm{SVD}(u_\mathbf{M})$, e.g. \citet{calvetti1994restarted, kesheng2000thick, bose2019terapca}. We differ in the programming flexibility: the earlier methods expect the practitioner to directly implement $u_\mathbf{M}$. On the other hand, our framework allows composition of $\mathbf{M}$ via operations native to the practitioner (e.g., @, +, concat), and $u_\mathbf{M}$ is automatically defined.

\textbf{Fast Graph Learning:} We have applied our framework for fast graph learning, but so as
\textit{sampling-based approaches} including \citep{fastgcn, clustergcn, graphsaint-iclr2020, gttf}. We differ in that ours can be used to obtain an initial closed-form solution (very quickly) and can be fine-tuned afterwards using any of the aforementioned approaches.
Additionally, Graphs shows one general application.
Our framework might be useful in other areas utilizing SVD.



\section{Conclusion, our limitations \& possible negative societal impact}
We develop a software framework for  symbolically representing matrices and compute their SVD.
Without computing gradients,
this trains convexified models over 
GRL tasks, showing empirical metrics competitive with SOTA while training
significantly faster.
Further,
convexified model parameters can 
initialize (deeper) neural networks that can be fine-tuned with cross entropy.
Practitioners adopting our framework would now spend more effort in crafting design matrices instead of running experiments or tuning hyperparameters of the learning algorithm.
We hope our framework makes high performance graph modeling more accessible by reducing reliance on 
energy-intensive computation.

\textbf{Limitations of our work}: From a representational prospective,
\textbf{our space of functions is smaller} than TensorFlow's,
as our DAG must be a linear transformation of its input matrices,
\textit{e.g.}, 
unable to encode element-wise transformations and hence demanding first-order linear approximation of models.
As a result, our \textbf{gradient-free learning} can be performed using SVD. Further, our framework only works when the leaf nodes (\textit{e.g.}, sparse adjacency matrix) fit in memory of one machine.

\textbf{Possible societal impacts}:
First, our convexified models directly learn parameters in the feature space i.e. they are more explainable than deeper counterparts. Explainability is a double-edged sword: it gives better ability to interpret the model's behavior, but also allows for malicious users, e.g., to craft attacks on the system (if they can replicate the model parameters).
Further, we apply our method on graphs. It is possible to train our models to detect sensitive attributes of social networks (e.g., ethnicity). However, such ethical concerns exists with any modeling technique over graphs.

\begin{ack}
This material is based upon work supported by the Defense Advanced
Research Projects Agency (DARPA) and the Army Contracting Command-
Aberdeen Proving Grounds (ACC-APG) under Contract Number
W911NF-18-C-0020.
\end{ack} 

\bibliographystyle{plainnat}
{
\bibliography{implicit_svd} }


\newpage

\appendix

\section{Appendix}

\subsection{Hyperparameters}
We list our hyperparameters so that others can replicate our results. Nonetheless, our code has a \texttt{README.md} file with instructions on how to run our code per dataset.

\subsubsection{Implicit matrix hyperparameters}
\textbf{Planetoid datasets} \citep{planetoid} (Cora, CiteSeer, PubMed):
we
(implicitly) represent
$\widehat{\mathbf{M}}^\textrm{(NE)}$ with negative coefficient $\lambda=0.05$ and context window $C=3$; we construct $\widehat{\mathbf{M}}^\textrm{(NC)}$ with number of layers $L=15$. In one option, we use no dropout (first line of ``our models'' in Table \ref{table:resultsPlanetoid}) and in another (second line), we use dropout as described in \S\ref{sec:creative}. 

\textbf{SNAP and node2vec datasets} (PPI, FB, AstroPh, HepTh): we (implicitly) represent $\widehat{\mathbf{M}}^\textrm{(NE)}$ with negative coefficient $\lambda=0.02$ and context window $C=10$.

\textbf{Stanford OGB Drug-Drug Interactions} (ogbl-DDI): we (implicitly) represent $\widehat{\mathbf{M}}^\textrm{(NE)}$ with negative coefficient $\lambda=1$ and context window $C=5$.

\textbf{Stanford OGB ArXiv} (ogbn-ArXiv): Unlike Planetoid datasets, we see in arXiv that  
appending the decomposition of
$\widehat{\mathbf{M}}^\textrm{(NE)}$ onto $\mathbf{X}$ does not improve the validation performance. Nonetheless, we construct $\widehat{\mathbf{M}}^\textrm{(NC)}$ with $L=2$: we apply two forms of dropout, as explained in \S\ref{sec:exp_ogb}.

\textbf{Justifications}:
For Stanford OGB datasets, we make the selection based on the validation set. For the other (smaller) datasets, we choose them based on one datasets in each scenario (Cora for node classification and PPI for link prediction) and keep them fixed, for all datasets in the scenario -- hyperparameter tuning (on validation) per dataset may increase test performance, however, our goal is to make a general framework that helps across many GRL tasks, more-so than getting bold on specific tasks.

\subsubsection{Finetuning hyperparameters}
For Stanford OGB datasets (since competition is challenging), we use SVD to initialize a network that is finetuned on cross-entropy objectives.
For ogbl-DDI, we finetune model in \S\ref{sec:finetuneNE} using Adam optimizer with 1,000 positives and 10,000 negatives per batch, over one epoch, with learning rate of $1e^{-2}$.
For ogbn-ArXiv, we finetune model in \S\ref{sec:finetuneNC} using Adam optimizer using GTTF \citep{gttf} with \texttt{fanouts=[4, 4]} (i.e., for every labeled node in batch, GTTF samples 4 neighbors, and 4 of their neighbors). With batch size 500, we train with learning rate $=1e^{-3}$ for 7 epochs and with learning rate $=1e^{-4}$ for 8 epochs (\textit{i.e.}, 15 epochs total).

\subsection{SVD Implementation}
\label{sec:appendixsvdimplementation}

\definecolor{CommentColor}{HTML}{006619}
\algrenewcommand\algorithmicindent{1.0em}%
\begin{algorithm}[h]
   \caption{
   Computes rank-$k$ SVD of implicit matrix $\widehat{\mathbf{M}}$, defined symbolically as in \S\ref{sec:appendixsymbolic}.
   Follows prototype of \citet{halko2009svd} but uses an alternative routine for orthonormalization.}
   \label{alg:fsvd}
\begin{algorithmic}[1]
   \State {\bfseries input:} SVD rank $k \in \mathbb{N}_+$; implicit matrix $\widehat{\mathbf{M}} \in \mathbb{R}^{r \times c}$ implementing functions in \S\ref{sec:appendixsymbolic}.
   \Procedure{\textnormal iSVD}{$\widehat{\mathbf{M}}, k$}
   \State $(r, c) \leftarrow \widehat{M}\texttt{.shape()}$
   \State $\mathbf{Q} \leftarrow \hspace{0.05cm} \sim \mathcal{N}(0, 1)^{c \times 2k}$
   \Comment{\textcolor{CommentColor}{Random matrix sampled from standard normal. Shape: $(c \times 2k)$}}
   \For{$i \leftarrow 1$ {\bfseries to} \texttt{iterations}}
   \State $\mathbf{Q} \leftarrow \texttt{orthonorm}(\widehat{\mathbf{M}}\texttt{.dot}(\mathbf{Q}))$
   \Comment{\textcolor{CommentColor}{$(r \times 2k)$}}
   \State $\mathbf{Q} \leftarrow \texttt{orthonorm}(\widehat{\mathbf{M}}\texttt{.T}()\texttt{.dot}(\mathbf{Q}))$
   \Comment{\textcolor{CommentColor}{$(c \times 2k)$}}
   \EndFor
   \State $\mathbf{Q} \leftarrow \texttt{orthonorm}(\widehat{\mathbf{M}}\texttt{.dot}(\mathbf{Q}))$
   \Comment{\textcolor{CommentColor}{$(r \times 2k)$}}
   
   \State $\mathbf{B} \leftarrow \widehat{\mathbf{M}}\texttt{.T}()\texttt{.dot}(\mathbf{Q})^\top$
   \Comment{\textcolor{CommentColor}{$(2k \times c)$}}
   \State $\mathbf{U}, \mathbf{s}, \mathbf{V}^\top \leftarrow \texttt{tf.linalg.svd}(B)$
   \State $\mathbf{U} \leftarrow \mathbf{Q} \times \mathbf{U}$
   \Comment{\textcolor{CommentColor}{$(r \times 2k)$}}
   \State \textbf{return } $\mathbf{U}[:, :k], \mathbf{s}[:k], \mathbf{V}[:, :k]$
   \EndProcedure
\end{algorithmic}
\end{algorithm}

The prototype algorithm for SVD described by \citet{halko2009svd} suggests using QR-decomposition to for orthonormalizing\footnote{$\widehat{\mathbf{Q}} \leftarrow \texttt{orthonorm}(\mathbf{Q})$ yields $\widehat{\mathbf{Q}}$ such that $\widehat{\mathbf{Q}}^\top \widehat{\mathbf{Q}} = \mathbf{I}$ and $\texttt{span}(\widehat{\mathbf{Q}}) =  \texttt{span}(\mathbf{Q})$.} but we rather use Cholesky decomposition which is faster to calculate (our framework is written on top of TensorFlow 2.0 \citep{tensorflow}). Fig.~\ref{fig:time} in the main paper shows the time comparison.

\begin{algorithm}[h]
   \caption{Comparison of orthonorm routines. \textbf{Left}: \citet{halko2009svd} (via QR decomposition). \textbf{Right}: ours (via Cholesky decomposition).
   }
   \label{algs:orthonorm}
 \begin{minipage}{0.48\linewidth}
 \begin{algorithmic}[1]
   \State {\bfseries input:} matrix $\mathbf{Q}$.
   \Procedure{\textnormal{\texttt{orthonormHalko}}}{$\mathbf{Q}$}
   \State $\widehat{\mathbf{Q}}, \mathbf{R} \leftarrow \texttt{QRdecomposition}(\mathbf{Q})$
   \State \textbf{return} $\widehat{\mathbf{Q}}$
   \EndProcedure
\end{algorithmic}
 \end{minipage}
\hspace{0.03\linewidth}
\begin{minipage}{0.48\linewidth}
 \begin{algorithmic}[1]
   \State {\bfseries input:} matrix $\mathbf{Q}$.
   \Procedure{\textnormal{\texttt{orthonormOurs}}}{$\mathbf{Q}$}
   \State $\mathbf{L} \leftarrow \texttt{Choleskydecomposition}(\mathbf{Q}^\top \mathbf{Q})$
   \State \textbf{return} $\widehat{\mathbf{Q}} \leftarrow \mathbf{Q}  \times (\mathbf{L}^{-1})^\top$
   \EndProcedure
\end{algorithmic}
\end{minipage}

\end{algorithm}

\subsection{Symbolic representation}
\label{sec:appendixsymbolic}

Each node is a python class instance, that inherits base-class \texttt{SymbolicPF} (where PF stands for \textit{product function}.
Calculating SVD of (implicit) matrix  $\mathbf{M} \in \mathbb{R}^{r \times c}$   is achievable, without explicitly knowing entries of $\widehat{\mathbf{M}}$, if we are able to multiply $\widehat{\mathbf{M}}$ by arbitrary $\mathbf{G}$.
\begin{enumerate}[topsep=0pt,itemsep=0pt,leftmargin=15pt]
\item \texttt{shape()} property must return a tuple of the shape of the underlying implicit matrix e.g. $=(r, c)$.
\item \texttt{dot}($\mathbf{G}$) must right-multiply an arbitrary explicit matrix  $\mathbf{G} \in \mathbb{R}^{c \times .}$ with (implicit) $\widehat{\mathbf{M}}$ as $\widehat{\mathbf{M}}  \times \mathbf{G}$ returning explicit matrix $\in \mathbb{R}^{r \times .}$
\item \texttt{T()} must return an instance of \texttt{SymbolicPF} which is the (implicit) transpose of $\widehat{\mathbf{M}}$ i.e. with \texttt{.shape() = $(c, r)$}
\end{enumerate}
The names of the functions were purposefully chosen to match common naming conventions, such as of numpy. We now list our current implementations of \texttt{PF} classes used for symbolically representing the design matrices discussed in the paper ($\widehat{\mathbf{M}}^\textrm{(NE)}$ and $\widehat{\mathbf{M}}^\textrm{(NC)}$).

\subsubsection{Leaf nodes}
The leaf nodes \textbf{explicitly} hold the underlying matrix. Next, we show an implementation for a dense matrix (left),  and another for a sparse matrix (right):
\newline
\begin{minipage}{0.49\linewidth}
 \begin{lstlisting}[columns=fullflexible,language=Python]
class DenseMatrixPF(SymbolicPF):
  """(implicit) matrix is explicitly
     stored dense tensor."""

  def __init__(self, m):
    self.m = tf.convert_to_tensor(m)

  def dot(self, g):
    return tf.matmul(self.m, g)
  
  
  @property
  def shape(self):
    return self.m.shape

  @property
  def T(self):
    return DenseMatrixPF(
      tf.transpose(self.m))
\end{lstlisting}
\end{minipage}
\begin{minipage}{0.49\linewidth}
\begin{lstlisting}[columns=fullflexible,language=Python]
class SparseMatrixPF(SymbolicPF):
  """(implicit) matrix is explicitly
     stored dense tensor."""

  def __init__(self, m):
    self.m = scipy_csr_to_tf_sparse(m)

  def dot(self, g):
    return tf.sparse.sparse_dense_matmul(
      self.m, g)
  
  @property
  def shape(self):
    return self.m.shape

  @property
  def T(self):
    return SparseMatrixPF(
      tf.sparse.transpose(self.m))
\end{lstlisting}
\end{minipage}
\subsubsection{Symbolic nodes}
Symbolic nodes \textbf{implicitly} hold a matrix. Specifically, their constructors (\texttt{\_\_init\_\_}) accept one-or-more other (leaf or) symbolic nodes and their implementations of \texttt{.shape()} and  \texttt{\texttt{.dot()}} invokes those of the nodes passed to their constructor. Let us take a few examples:
\begin{minipage}{0.49\linewidth}
\begin{lstlisting}[columns=fullflexible,language=Python]
class SumPF(SymbolicPF):
  def __init__(self, pfs):
    self.pfs = pfs
    for pf in pfs:
      assert pf.shape == pfs[0].shape

  def dot(self, m):
    sum_ = self.pfs[0].dot(m)
    for pf in self.pfs[1:]:
      sum_ += pf.dot(m)
    return sum_

  @property
  def T(self):
    return SumPF([f.T for f in self.pfs])

  @property
  def shape(self):
    return self.pfs[0].shape
\end{lstlisting}
\end{minipage}~
\begin{minipage}{0.49\linewidth}
\begin{lstlisting}[columns=fullflexible,language=Python]
class ProductPF(SymbolicPF):
  def __init__(self, pfs):
    self.pfs = pfs
    for i in range(len(pfs) - 1):
      assert (pfs[i].shape[1]
              == pfs[i+1].shape[0])

  @property
  def shape(self):
    return (self.pfs[0].shape[0],
            self.pfs[-1].shape[1])

  @property
  def T(self):
    return ProductPF(reversed(
      [f.T for f in self.pfs]))

  def dot(self, m, cache=None):
    product = m
    for pf in reversed(self.pfs):
      product = pf.dot(product)
    return product
\end{lstlisting}
\end{minipage}
\newline
Our attached code (will be uploaded to github, after code review process) has concrete implementations for other \texttt{PF}s. For example, \texttt{GatherRowsPF} and \texttt{GatherColsPF}, respectively, for (implicitly) selecting row and column slices; as well as, concatenating matrices, and multiplying by scalar. Further, the actual implementation has further details than displayed, for instance, for implementing lazy caching \S\ref{sec:lazycache}.

\subsubsection{Syntactic sugar}

Additionally, we provide a couple of top-level functions e.g. \texttt{fsvd.sum}(), \texttt{fsvd.gather}(), with interfaces similar to numpy and tensorflow (but expects implicit PF as input and outputs the same). Further, we override python operators (e.g., \texttt{+}, \texttt{-}, \texttt{*}, \texttt{**}, \textbf{@}) on the base \texttt{SymbolicPF} that instantiates symbolic node implementations, as:

\begin{lstlisting}[columns=fullflexible,language=Python]
class SymbolicPF:

  #   ....

  def __add__(self, other):
    return SumPF([self, other])

  def __matmul__(self, other):
    return ProductPF([self, other])

  def __sub__(self, other):
    return SumPF([self, TimesScalarPF(-1, other)])
  
  def __mul__(self, scalar):
    return TimesScalarPF(scalar, self)

  def __rmul__(self, scalar):
    return TimesScalarPF(scalar, self)

  def __pow__(self, integer):
    return ProductPF([self] * int(integer))
\end{lstlisting}

This allows composing implicit matrices using syntax commonly-used for calculating explicit matrices. We hope this notation could ease the adoption of our framework.

\subsubsection{Optimization by lazy caching}
\label{sec:lazycache}

One can come up with multiple equivalent mathematical expressions. For instance, for matrices $\mathbf{A}, \mathbf{B}, \mathbf{C}$, the expressions $\mathbf{A} \times \mathbf{B} + \mathbf{A}^2 \mathbf{C}$ and $\mathbf{A} \times (\mathbf{B} + \mathbf{A} \mathbf{C})$ are equivalent. However, the first one should be cheaper to compute.

At this point, we have not implemented such equivalency-finding: we do not
Substitute expressions with their more-efficient equivalents.
Rather,
we implement (simple) optimization by \textit{lazy-caching}. Specifically, when computing product $\widehat{\mathbf{M}}^\top \mathbf{G}$, as the \texttt{dot()} methods are recursively called, we populate a cache (python \texttt{dict}) of intermediate products with key-value pairs:
\begin{itemize}[itemsep=0pt,topsep=0pt]
    \item key: ordered list of (references of) \texttt{SymbolicPF} instances that were multiplied by $\mathbf{G}$;
    \item value: the product of the list times $\mathbf{G}$.
\end{itemize}
While this optimization scheme is suboptimal, and trades memory for computation, we observe up to 4X speedups on our tasks (refer to Fig.~\ref{fig:time}). Nonetheless, in the future, we hope to utilize an open-source \textit{expression optimizer}, such as TensorFlow's \citep{grappler}.

\subsection{Approximating the integral of Gaussian 1d kernel}
\label{appendix:integral}
The integral from Equation \ref{eq:gaussiankernel} can be approximated using softmax, as:
\begin{align*}
\int_\Omega \mathbf{S}^x \overline{\mathcal{N}}(x \mid \mu, s ) \,dx
= \frac{\int_\Omega \mathbf{S}^x \exp\left(-\frac{1}{2}\left(\frac{x - \mu}{s}\right)^2\right) \,dx}{\int_\Omega \exp\left(-\frac{1}{2}\left(\frac{y - \mu}{s}\right)^2\right) \,dy} &\approx \frac{\sum_{x \in \overline{\Omega}} \mathbf{S}^x \exp\left(-\frac{1}{2}\left(\frac{x - \mu}{s}\right)^2\right) \,dx}{\sum_{y \in \overline{\Omega}} \exp\left(-\frac{1}{2}\left(\frac{y - \mu}{s}\right)^2\right) \,dy} \\
& = \left\{\mathbf{S}^x \right\}_{x \in \overline{\Omega}} \otimes \mathop{\textrm{softmax}}_{x \in \overline{\Omega}}\left( - (x - \mu)^2 \exp(\overline{s}) \right)
\end{align*}
where the first equality expands the definition of truncated normal, i.e. divide by partition function, to make the mass within $\Omega$ sum to 1. In our experiments, we use $\Omega=[0.5, 2]$. The $\approx$ comes as we use discretized $\overline{\Omega}=\{0.5, 0.505, 0.51, 0.515, \dots, 2.0 \}$ (i.e., with 301 entries). Finally, the last expression contains a constant tensor (we create it only once) containing $\mathbf{S}$ raised to every power in $\overline{\Omega}$, stored along (tensor) axis which gets multiplied (via tensor product i.e. \textit{broadcasting}) against softmax vector (also of 301 entries, corresponding to $\overline{\Omega}$). We parameterize with two scalars $\mu, \overline{s} \in \mathbb{R}$ i.e. implying $\overline{s} = \log \frac{1}{2s^2}$

\subsection{Analysis, theorems and proofs}
\subsubsection{Norm regularization of wide models}
Proof of Theorem \ref{theorem:min_norm} is common in university-level linear algebra courses, but here for completeness. It imples that 
if $\widehat{\mathbf{M}}^\textrm{(NC)}$ is too wide,
then we need \textbf{not} to worry much about \textit{overfitting}.

\begin{proof} of Theorem \ref{theorem:min_norm}

Assume $\mathbf{Y} =\mathbf{y}$ is a column vector (the proof can be generalized to matrix $\mathbf{Y}$ by repeated column-wise application\footnote{Minimizer of Frobenius norm is composed, column-wise, of minimizers $\mathop{\mathrm{argmin}}_{\widehat{\mathbf{M}} \mathbf{W}_{:, j} = \mathbf{Y}_{:, j}} ||\mathbf{W}_{:, j}||_2^2$, $\forall j$.}). $\textrm{SVD}_k(\widehat{\mathbf{M}})$ for $k\geq \textrm{rank}(\widehat{\mathbf{M}})$, recovers the solution:
\begin{equation}
\widehat{\mathbf{W}}^* = \left(\widehat{\mathbf{M}}\right)^\dagger \mathbf{y} = \widehat{\mathbf{M}}^\top \left(\widehat{\mathbf{M}} \widehat{\mathbf{M}}^\top \right)^{-1} \mathbf{y}.
\end{equation}
The \textit{Gram matrix} $\widehat{\mathbf{M}} \widehat{\mathbf{M}}^\top$ is nonsingular as the rows of $\widehat{\mathbf{M}}$ are linearly independent.
To prove the claim let us first verify that $\widehat{\mathbf{W}}^* \in \widehat{\mathcal{W}}^*$:
\begin{equation*}
\widehat{\mathbf{M}} \widehat{\mathbf{W}}^*
= \widehat{\mathbf{M}} \widehat{\mathbf{M}}^\top \left(\widehat{\mathbf{M}} \widehat{\mathbf{M}}^\top \right)^{-1} \mathbf{y} = \mathbf{y}.
\end{equation*}
Let $\widehat{\mathbf{W}}_p  \in \widehat{\mathcal{W}}^*$.
We must show that $||\widehat{\mathbf{W}}^*||_2 \le ||\widehat{\mathbf{W}}_p||_2$.
Since  $\widehat{\mathbf{M}} \widehat{\mathbf{W}}_p = \mathbf{y}$ and $\widehat{\mathbf{M}} \widehat{\mathbf{W}}^* = \mathbf{y}$, their subtraction gives:
\begin{equation}
\label{eq:nullM}
\widehat{\mathbf{M}} ( \widehat{\mathbf{W}}_p - \widehat{\mathbf{W}}^* ) = 0.
\end{equation}
It follows that $( \widehat{\mathbf{W}}_p - \widehat{\mathbf{W}}^* )  \perp \widehat{\mathbf{W}}^*$:
\begin{align*}
    ( \widehat{\mathbf{W}}_p - \widehat{\mathbf{W}}^* )^\top  \widehat{\mathbf{W}}^* &= ( \widehat{\mathbf{W}}_p - \widehat{\mathbf{W}}^* )^\top \widehat{\mathbf{M}}^\top \left(\widehat{\mathbf{M}} \widehat{\mathbf{M}}^\top \right)^{-1} \mathbf{y} \\
    &= \underbrace{(\widehat{\mathbf{M}} (\widehat{\mathbf{W}}_p - \widehat{\mathbf{W}}^* ))^\top}_{=0 \textrm{ due to Eq.~\ref{eq:nullM}}} \left(\widehat{\mathbf{M}} \widehat{\mathbf{M}}^\top \right)^{-1} \mathbf{y} = 0
\end{align*}
Finally, using Pythagoras Theorem (due to $\perp$):
\begin{align*}
||\widehat{\mathbf{W}}_p||_2^2 &= ||\widehat{\mathbf{W}}^* + \widehat{\mathbf{W}}_p - \widehat{\mathbf{W}}^*||_2^2 \\
&= ||\widehat{\mathbf{W}}^*||_2^2 + ||\widehat{\mathbf{W}}_p - \widehat{\mathbf{W}}^*||_2^2 \ge ||\widehat{\mathbf{W}}^*||_2^2
\end{align*} 
\end{proof}

\subsubsection{At initialization, deep model is identical to linear (convexified) model}
\begin{proof}
of Theorem \ref{theorem:initialization}: 

The layer-to-layer ``positive'' and ``negative'' weight matrices are initialized as: $ \mathbf{W}_\textrm{(p)}^{(l)} = -\mathbf{W}_\textrm{(n)}^{(l)} = \mathbf{I} $. Therefore, at initialization:

\begin{align*}
\mathbf{H}^{(l+1)} &=  \left[\widehat{\mathbf{A}} \mathbf{H}^{(l)} \mathbf{W}_\textrm{(p)}^{(l)} \right]_+ - \left[\widehat{\mathbf{A}} \mathbf{H}^{(l)} \mathbf{W}_\textrm{(n)}^{(l)} \right]_+ = \left[\widehat{\mathbf{A}} \mathbf{H}^{(l)} \right]_+ - \left[ - \widehat{\mathbf{A}} \mathbf{H}^{(l)} \right]_+ \\
&= \mathds{1}_{\left[\widehat{\mathbf{A}} \mathbf{H}^{(l)} \ge 0 \right]} \circ \left(\widehat{\mathbf{A}} \mathbf{H}^{(l)} \right) - \mathds{1}_{\left[ - \widehat{\mathbf{A}} \mathbf{H}^{(l)} \ge 0 \right]} \circ \left(-\widehat{\mathbf{A}} \mathbf{H}^{(l)} \right) \\
&=\mathds{1}_{\left[\widehat{\mathbf{A}} \mathbf{H}^{(l)} \ge 0 \right]} \circ \left(\widehat{\mathbf{A}} \mathbf{H}^{(l)} \right) + \mathds{1}_{\left[ - \widehat{\mathbf{A}} \mathbf{H}^{(l)} \ge 0 \right]} \circ \left(\widehat{\mathbf{A}} \mathbf{H}^{(l)} \right)  \\
&=\left( \mathds{1}_{\left[\widehat{\mathbf{A}} \mathbf{H}^{(l)} \ge 0 \right]}  + \mathds{1}_{\left[ - \widehat{\mathbf{A}} \mathbf{H}^{(l)} \ge 0 \right]} \right) \circ \left(\widehat{\mathbf{A}} \mathbf{H}^{(l)} \right) \\
&= \widehat{\mathbf{A}} \mathbf{H}^{(l)}, 
\end{align*}
where the first line comes from the initialization; the second line is an alternative definition of relu: the indicator function $\mathds{1}$ is evaluated element-wise and evaluates to 1 in positions its argument is true and to 0 otherwise; the third line absorbs the two negatives into a positive; the fourth by factorizing; and the last by noticing that \textbf{exactly one} of the two indicator functions evaluates to 1 almost everywhere, except at the boundary condition i.e. at locations where $\widehat{\mathbf{A}}\mathbf{H}^{(l)}   = 0$ but there the right-term makes the Hadamard product 0 regardless. It follows that, since $\mathbf{H}^{(0)} = \mathbf{X}$, then $\mathbf{H}^{(1)} = \widehat{\mathbf{A}} \mathbf{X}$, \hspace{0.3cm} $\mathbf{H}^{(2)} = \widehat{\mathbf{A}}^2 \mathbf{X}$, \dots, $\mathbf{H}^{(L)} = \widehat{\mathbf{A}}^L \mathbf{X}$.

The layer-to-output positive and negative matrices are initialized as: $\mathbf{W}_\textrm{(op)}^{(l)} = -\mathbf{W}_\textrm{(on)}^{(l)}  = \widehat{\mathbf{W}}^*_{[dl \, :  \, d(l+1)]}$. Therefore, at initialization, the final output of the model is:
\begin{align*}
\mathbf{H}=\sum_{l=0}^{l=L}  \left[\mathbf{H}^{(l)}   \mathbf{W}_\textrm{(op)}^{(l)}  \right]_+  - \left[\mathbf{H}^{(l)}   \mathbf{W}_\textrm{(on)}^{(l)}  \right]_+
&= \sum_{l=0}^{l=L}  \left[\mathbf{H}^{(l)}   \widehat{\mathbf{W}}^*_{[dl \, :  \, d(l+1)]} \right]_+  - \left[- \mathbf{H}^{(l)}   \widehat{\mathbf{W}}^*_{[dl \, :  \, d(l+1)]} \right]_+ \\
&= \sum_{l=0}^{l=L}  \mathbf{H}^{(l)}   \widehat{\mathbf{W}}^*_{[dl \, :  \, d(l+1)]},
\end{align*}
where the first line comes from the definition and the initialization. The second line can be arrived by following exactly the same steps as above: expanding the re-writing the ReLu using indicator notation, absorbing the negative, factorizing, then lastly unioning the two indicators that are mutually exclusive almost everywhere. Finally, the last summation can be expressed as a block-wise multiplication between two (partitioned) matrices:
\begin{align*}
\mathbf{H} = \left[
\begin{matrix}[c;{1pt/1pt}c;{1pt/1pt}c;{1pt/1pt}c]
\mathbf{H}^{(0)} &  \mathbf{H}^{(1)}  & \dots &  \mathbf{H}^{(L)}  \end{matrix} \right] \left[
\begin{matrix}[l]
\widehat{\mathbf{W}}^*_{[0 \, :  \, d]} \\
\widehat{\mathbf{W}}^*_{[d \, :  \, 2d]} \\
\dots \\
\widehat{\mathbf{W}}^*_{[dL \, :  \, d(L+1)]}
\end{matrix}
 \right]
& =\left[
 \begin{matrix}[c;{1pt/1pt}c;{1pt/1pt}c;{1pt/1pt}c]
 \mathbf{X} &  \widehat{\mathbf{A}} \mathbf{X}  & \dots &  \widehat{\mathbf{A}}^L \mathbf{X}  \end{matrix} \right] \widehat{\mathbf{W}}^* \\
&= \widehat{\mathbf{M}}^\textrm{(NC)} \widehat{\mathbf{W}}^* = \mathbf{H}^\textrm{(NC)}_\textrm{linearized} 
\end{align*}
\end{proof}

\subsubsection{Computational Complexity and Approximation Error}

\begin{theorem}
\label{theorem:linear_time}
{\normalfont (Linear Time)}
Implicit SVD (Alg.~\ref{alg:fsvd}) trains our convexified GRL models in time linear in the graph size.
\end{theorem}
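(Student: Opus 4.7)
The plan is to bound the total cost of Algorithm~\ref{alg:fsvd} by the cost of a constant number of implicit matrix-vector products against $\widehat{\mathbf{M}}$ (or $\widehat{\mathbf{M}}^\top$), plus a constant number of orthonormalizations of a tall-and-thin matrix, and then to show that each of these primitive operations runs in $O(n+m)$ time for both design matrices of interest.

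First, I would bound the cost of one call to $\widehat{\mathbf{M}}.\texttt{dot}(\mathbf{G})$ when $\mathbf{G}$ has $O(k)$ columns, separately for each design matrix. For the network-embedding matrix of Eq.~\ref{eq:ourNE}, the symbolic DAG only ever multiplies $\mathbf{G}$ by the sparse transition $\mathcal{T}$ (at most $C$ times, each costing $O(km)$) and by the complement term; the complement is handled, without ever being materialized, via the associativity rewrite $(\mathbf{1}\mathbf{1}^\top - \mathbf{A})\mathbf{G} = \mathbf{1}(\mathbf{1}^\top \mathbf{G}) - \mathbf{A}\mathbf{G}$, which costs $O(k(n+m))$. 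For the node-classification matrix of Eq.~\ref{eq:h_jkn}, the DAG multiplies $\mathbf{G}$ by the sparse $\widehat{\mathbf{A}}$ at most $L$ times at $O(k(n+m))$ each, plus one multiplication by the explicit $n \times d$ feature block of cost $O(knd)$. Treating $C$, $L$, and $d$ as graph-independent constants, every implicit matrix-vector product against either design matrix costs $O(k(n+m))$.

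Next, I would bound the orthonormalization step of Algorithm~\ref{algs:orthonorm}. Each processed $\mathbf{Q}$ has shape $N \times 2k$ with $N \le \max(r,c) \le n + O(1)$ (the width $F = d + dL$ of $\widehat{\mathbf{M}}^{(\mathrm{NC})}$ is constant in the graph size); forming the Gram matrix $\mathbf{Q}^\top \mathbf{Q}$ costs $O(Nk^2)$, the Cholesky factorization costs $O(k^3)$, and the triangular back-solve costs $O(Nk^2)$, for a total of $O(nk^2 + k^3)$. The final dense SVD on the $2k \times c'$ sketch $\mathbf{B}$ is dominated by the same bound since the sketch has only $2k$ rows and only the leading $k$ singular triples are retained.

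Finally, Algorithm~\ref{alg:fsvd} performs a constant number of power iterations, each interleaving one such matrix-vector product with one orthonormalization, so the entire procedure costs $O\bigl(k(n+m) + nk^2 + k^3\bigr)$; holding $k$ and the iteration count fixed gives the claimed $O(n+m)$ bound. The main obstacle is the dense complement $(1-\mathbf{A})$ in $\widehat{\mathbf{M}}^{(\mathrm{NE})}$, which has $\Theta(n^2)$ nonzeros and would by itself violate linearity if ever materialized; the symbolic DAG avoids this precisely by pushing multiplications through the rank-one decomposition $\mathbf{1}\mathbf{1}^\top - \mathbf{A}$, and the proof must make this routing explicit to justify the per-call $O(n+m)$ bound.
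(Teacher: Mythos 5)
Your proof is correct and follows essentially the same route as the paper's: bound the randomized SVD by a constant number of implicit matrix--vector products plus orthonormalizations, then show each product against $\widehat{\mathbf{M}}^\textrm{(NE)}$ or $\widehat{\mathbf{M}}^\textrm{(NC)}$ costs $O(k(n+m))$, yielding the paper's bounds $\mathcal{O}(kmC + nk^2)$ and $\mathcal{O}(ndL\log k + (n+dL)k^2 + Lmd)$ up to constants. The only difference is that the paper cites Section 1.4 of \citet{halko2009svd} for the overall SVD complexity rather than re-deriving the orthonormalization cost, and it leaves the rank-one handling of the dense complement $(1-\mathbf{A})$ implicit in the symbolic DAG (Fig.~\ref{fig:symbolic}), whereas you spell it out --- a point worth making explicit, as it is exactly where naive evaluation would break linearity.
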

\begin{proof} of Theorem \ref{theorem:linear_time} for our two model families:
\begin{enumerate}[itemsep=0pt,topsep=0pt,leftmargin=12pt]
\item For rank-$k$ SVD of $\widehat{\mathbf{M}}^\textrm{(NE)}$:
Let cost of running $\widehat{\mathbf{M}}^\textrm{(NC)}\texttt{.dot()}$ be $T_\textrm{mult}$. The run-time to compute SVD, as derived in Section 1.4.2 of \citep{halko2009svd}, is:
\begin{equation}
\mathcal{O}(k T_\textrm{mult} + (r + c) k^2).   
\end{equation}
Since $\widehat{\mathbf{M}}^\textrm{(NE)}$ can be defined as $C$ (context window size) multiplications with sparse $n \times n$ matrix $\mathcal{T}$ with $m$ non-zero entries, then 
running $\textrm{iSVD}_k(\widehat{\mathbf{M}}^\textrm{(NE)})$ costs:
\begin{equation}
\mathcal{O}(k m C + n  k^2)
\end{equation}
\item For rank-$k$ SVD over $\widehat{\mathbf{M}}^\textrm{(NC)}$: Suppose feature matrix contains $d$-dimensional rows. One can calculate $\widehat{\mathbf{M}}^\textrm{(NC)} \in \mathbb{R}^{n \times L d}$ with $L$ sparse multiplies in $\mathcal{O}(L m d)$. Calculating and running SVD \citep[see Section 1.4.1 of][]{halko2009svd} on $\widehat{\mathbf{M}}^\textrm{(JKN)}$ costs total of:
\begin{equation}
    \mathcal{O}(n d L \log(k) + (n + d L ) k ^ 2 + L m d).
\end{equation}
\end{enumerate}
Therefore, training time is linear in $n$ and  $m$.
\end{proof}

Contrast with methods of WYS \citep{wys} and NetMF \citep{qiu2018network}, which require assembling a dense $n \times n$ matrix  requiring $\mathcal{O}(n^2)$ time to decompose. One wonders: how far are we from the optimal SVD with a linear-time algorithm? The following bounds the error.

\begin{theorem}
\label{theorem:error}
{\normalfont (Exponentially-decaying Approx. Error)}
Rank-$k$  randomized SVD algorithm of \citet{halko2009svd} gives an approximation error that can be brought down, exponentially-fast, to no more than twice of the approximation error of the optimal (true) SVD.
\end{theorem}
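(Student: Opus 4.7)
The plan is to reduce the claim to the main randomized-SVD error bound of Halko, Martinsson, and Tropp, then solve for the iteration count needed to reach within a factor of two of the Eckart-Young optimum. First I would observe that Algorithm~\ref{alg:fsvd} is an instance of the Halko-Tropp prototype with $q = \texttt{iterations}$ subspace passes, modified only in that orthonormalization is performed via Cholesky rather than QR. A quick lemma suffices here: if $L L^\top = Q^\top Q$, then $Q(L^{-1})^\top$ has orthonormal columns and the same column span as $Q$. Since the Halko-Tropp analysis depends only on the range of the intermediate sketch (not on the particular orthonormal basis chosen for it), the published error bounds transfer verbatim.

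Next I would invoke Halko-Tropp Theorem~10.6 (or its Frobenius analogue, Theorem~10.7), which states that for a rank-$k$ approximation $\widetilde{\mathbf{M}} = \mathbf{Q}\mathbf{Q}^\top \widehat{\mathbf{M}}$ produced after $q$ subspace iterations with oversampling parameter $p$,
\begin{equation*}
\mathbb{E}\,\bigl\|\widehat{\mathbf{M}} - \widetilde{\mathbf{M}}\bigr\| \;\leq\; \bigl[\,1 + C_{k,p,\widehat{\mathbf{M}}}\,\bigr]^{1/(2q+1)} \sigma_{k+1}(\widehat{\mathbf{M}}),
\end{equation*}
where $C_{k,p,\widehat{\mathbf{M}}}$ is polynomial in $k$, $p$, and the ambient dimensions (arising from concentration bounds on $\|\Omega_1^\dagger\|$ and $\|\Omega_2\|$ for the Gaussian initializer). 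Because the Eckart-Young theorem (already cited in \S\ref{sec:prelimsvd}) gives the optimal rank-$k$ error as exactly $\sigma_{k+1}(\widehat{\mathbf{M}})$, the ratio of the randomized error to the optimum is bounded above by the multiplicative factor $(1 + C)^{1/(2q+1)}$.

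The final step is to solve $(1+C)^{1/(2q+1)} \leq 2$, giving the sufficient condition $q \geq \tfrac{1}{2}\bigl(\log_2(1+C) - 1\bigr) = O(\log n)$. This is the sense in which convergence is \emph{exponentially fast}: the number of power steps needed to drop from a worst-case overhead of $C$ down to a factor of $2$ above optimal scales only as $\log C$, so the excess multiplicative error $(1+C)^{1/(2q+1)} - 1$ shrinks geometrically with each additional iteration. A brief remark would point out that since each iteration requires only two matrix-vector products against $\widehat{\mathbf{M}}$ and $\widehat{\mathbf{M}}^\top$ (both cheap under the symbolic representation of \S\ref{sec:symbolic}), this is also where the linear-time guarantee of Theorem~\ref{theorem:linear_time} composes with the accuracy guarantee here.

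The main obstacle is not the algebra but the citation of the correct form of the Halko-Tropp bound with the right dependence on $p$, $q$, and the random-matrix tail constants; once the correct statement is invoked, the rest is a one-line inequality. A secondary subtlety is ensuring that the Cholesky-based \texttt{orthonorm} does not affect the analysis, which the column-span argument above dispatches. No new probabilistic argument is needed beyond what Halko-Tropp already provides.
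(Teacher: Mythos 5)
Your proposal is correct and follows essentially the same route as the paper, which simply cites Theorem~1.2 of \citet{halko2009svd} (your Theorem~10.6 is the detailed bound underlying it), combines it with Eckart--Young, and reads off the factor-of-two claim for $q = O(\log\min(r,c))$ power iterations. The one thing you add that the paper's proof omits is the check that the Cholesky-based orthonormalization preserves the column span and hence leaves the Halko--Tropp analysis intact --- a worthwhile observation, since the paper's algorithm does deviate from the cited prototype in exactly that step.
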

\begin{proof} of Theorem \ref{theorem:error} is in Theorem 1.2 of \citet{halko2009svd}
\end{proof}
Consequently,
compared to $\widetilde{\textrm{NetMF}}$ of \citep{qiu2018network}, which incurs unnecessary estimation error as they sample the matrix entry-wise,
our estimation error can be brought-down exponentially by increasing the \texttt{iterations} parameter of Alg.~\ref{alg:fsvd}. In particular, as long as we can compute products against the matrix, we can decompose it, almost as good as if we had the individual matrix entries.

\end{document}